\newtheorem*{rep@theorem}{\rep@title}
\newcommand{\newreptheorem}[2]{%
\newenvironment{rep#1}[1]{%
 \def\rep@title{#2 \ref{##1}}%
 \begin{rep@theorem}}%
 {\end{rep@theorem}}}
\theoremstyle{plain}
\newtheorem{theorem}{Theorem}
\newtheorem{lemma}{Lemma}
\theoremstyle{definition}
\newtheorem{definition}{Definition}
\theoremstyle{remark}
\newcommand{\R}{\mathbb{R}}
\newcommand{\E}{\mathbb{E}}
\newcommand{\prob}{\mathbb{P}}
\newcommand{\X}{\mathcal{X}}
\newcommand{\Y}{\mathcal{Y}}
\newcommand{\true}{\mathrm{true}}
\newcommand{\gtrue}{g_\mathrm{true}}
\newcommand{\ftrue}{f_\mathrm{true}}
\newcommand{\eiuu}{\mathrm{qNEIUU}}
\newcommand{\D}{\mathcal{D}}
\newcommand{\tf}{\tilde{f}}
\newcommand{\AF}{\mathrm{EUBO}}
\newcommand{\PE}{PE}
\DeclareMathOperator*{\argmax}{argmax}
\begin{document}

\runningtitle{Preference Exploration for Efficient Bayesian Optimization with Multiple Outcomes}

\twocolumn[

\aistatstitle{Preference Exploration for Efficient \\ Bayesian Optimization with Multiple Outcomes}

\aistatsauthor{Zhiyuan Jerry Lin \And Raul Astudillo \And  Peter I. Frazier \And Eytan Bakshy}

\aistatsaddress{Meta \And  Cornell University \And Cornell University \And Meta} ]

\begin{abstract}
We consider Bayesian optimization of expensive-to-evaluate experiments that generate vector-valued outcomes over which a decision-maker (DM) has preferences. These preferences are encoded by a utility function that is not known in closed form but can be estimated by asking the DM to express preferences over pairs of outcome vectors. To address this problem, we develop \emph{Bayesian optimization with preference exploration}, a novel framework that alternates between interactive \emph{real-time} preference learning with the DM via pairwise comparisons between outcomes, and Bayesian optimization with a learned compositional model of DM utility and outcomes. Within this framework, we propose preference exploration strategies specifically designed for this task, and demonstrate their performance via extensive simulation studies.
\end{abstract}

\newcommand\blfootnote[1]{%
  \begingroup
  \renewcommand\thefootnote{}\footnote{#1}%
  \addtocounter{footnote}{-1}%
  \endgroup
}

\section{INTRODUCTION}

Bayesian optimization (BO) is a sequential experimental design framework for efficient global optimization of black-box functions with expensive or time-consuming evaluations. It has succeeded in many real-world experimentation tasks, including materials design~\citep{frazier2016bayesian, packwood2017bayesian, zhang2020bayesian}, robot locomotion~\citep{calandra2016bayesian}, and internet experiments~\citep{letham2019constrained, mao2020real}.

This work focuses on a common practical problem faced by decision-makers (DMs) who wish to apply BO to time-consuming experiments with multiple outcomes of interest. DMs have unknown preferences over outcomes which can be elucidated via a limited set of interactions with the DM, and we wish to gather information through such interactions to support efficient experimentation.  Such problems commonly arise in A/B testing~\citep{bakshy2018ae} and simulation-based design~\citep{maddox2021optimizing}.

There are several possible approaches to BO with multiple outcomes in the literature, each with their own desiderata in our context. One approach is to have the DM express a fully-determined trade-off over outcomes via a function combining these outcomes into a single real-valued performance measure, and to perform single-objective BO with this function. Unfortunately, DMs are often unable to do this \citep{lepird2015bayesian}.

A second approach is multi-objective BO (MOBO) \citep{Hakanen2017-ie,feliot2018user,Abdolshah2019-qj}. MOBO aims to identify the entire feasible Pareto front but is typically inefficient because DMs are often interested only in a particular part of the Pareto front \citep{wang2017mini}.

A third approach directly presents a DM sets of $q$ designs  (most commonly, pairs, i.e., $q=2$) and asks the DM to express their preference over the sets. This data is used to model the DM's preferences over the designs directly~\citep{Brochu2008-cu,brochu2010interactive,Gonzalez2017-nx,siivola2020preferential}. We broadly refer to these methods as preferential BO (PBO).

PBO  methods can be implemented in our context by performing time-consuming experiments for the designs in the sets selected by a PBO algorithm, and then presenting their outcomes to the DM for comparison. However, such an approach would be inefficient in terms of time, experiment resources, and DM attention since DMs must wait for experiment results to complete to further input their preferences, which can be disruptive and time-consuming.  While higher throughput is possible if many designs are evaluated simultaneously~\citep{siivola2020preferential}, the rate at which additional information can be gathered about the DM's preferences is limited by the time-consuming experiments. Instead, we might learn more with less DM time via queries generated in real time using existing or hypothetical outcomes based on previously observed experimental data.

Toward overcoming the drawbacks of the above discussed approaches, we propose a novel human-in-the-loop algorithmic framework for optimizing multiple outcomes called \emph{Bayesian optimization with preference exploration (BOPE)}. In this framework, outcomes arise from a time-consuming-to-evaluate function $\ftrue : \R^d \rightarrow \R^k$, and the DM's preferences can be viewed as originating from a utility function $\gtrue : \R^k \rightarrow \R$, which is unknown but can be learned through the DM's responses to \emph{queries} in the form of comparisons between outcomes\footnote{For simplicity we consider pairwise comparisons, but generalizing BOPE to other comparisons is straightforward.}. The goal is to solve
\begin{equation}
\label{eq:main}
  \max_{x \in \X} \gtrue(\ftrue(x)),
\end{equation}
where $\X\subset\R^d$ is the \emph{design space}, using a limited number of queries to the DM and experiments (i.e., evaluations of $\ftrue$).

To do so, our framework iterates between two stages: \emph{preference exploration} (\PE) and \emph{experimentation}.
During a \PE{} stage, an algorithm (a \emph{\PE{} strategy}) generates a query consisting of two outcome vectors (i.e., elements of $\R^k$) for the DM to compare. The DM states their preferred outcome, and another query is presented in real time based on the result. Outcome vectors in these queries need not be generated by evaluating $\ftrue$. However, as we will see later, leveraging the available knowledge about $\ftrue$ can significantly improve performance. 
During an experimentation stage, an {\it experimentation strategy}
chooses a batch of points in the design space at which $\ftrue$ is evaluated.
In a \PE{} stage, previous DM queries and past experiment evaluations are used to choose the outcome vectors about which DM preference is elicited.
Similarly, in a experimentation stage, all information gathered by this point is used to determine design points that will be evaluated.

This approach provides four benefits.
First, relative to PBO, it supports models that decompose the latent objective function into separate models of the outcomes and the DM's preferences over these outcomes. This can improve prediction relative to PBO.
Second, it supports greater flexibility than PBO  when selecting user queries.
Queries can be selected adaptively after each batch of experiments and can include hypothetical outcomes generated via a predictive model of $\ftrue$, or outcomes of designs evaluated at previous points in time.  
Third, relative to MOBO, learned preferences allow focusing experimental attention on the relevant portion of the Pareto front, reducing the number of experiments needed.
Finally, in comparison with MOBO, our approach automatically handles non-monotone preferences over outcomes. This might arise, e.g., when designing a material that should neither be too stiff nor too flexible, or in drug discovery when a chemical's concentration in the blood should fall near a target.

\begin{figure}
    \centering
    \hspace{-0.1in}
    \includegraphics[width=0.95\linewidth]{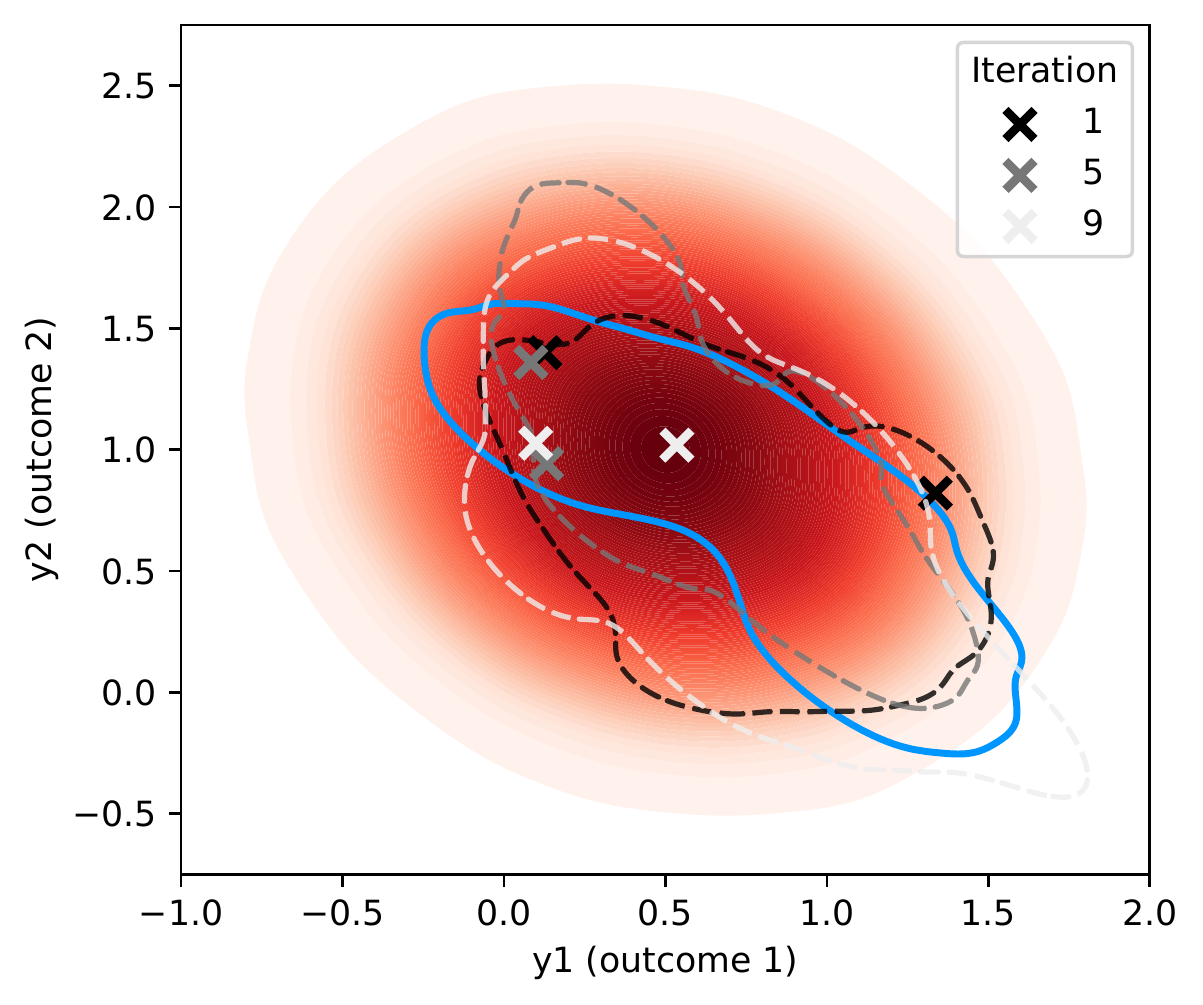}
    \caption{
    Preference exploration for a problem with a two-dimensional outcome space. The heatmap illustrates the DM's utility function $\gtrue(y)$ and the blue line circumscribes the space of possible outcomes achievable by $\ftrue(x)$ for any design $x \in \X$. Evaluation of $\ftrue(\cdot)$ is time-consuming. Our framework aims to collect preference data to support Bayesian optimization of $\gtrue(\ftrue(\cdot))$ with the aid of probabilistic surrogate models of outcomes ($f$) and DM utilities ($g$).
    $\times$s show outcome vectors presented to the DM by EUBO-$\tf$, over multiple iterations. 
    Each iteration queries regions likely to be of highest utility to the DM according to $g$ within search sets defined by independent sample paths  $\tf$ from $f$ (dashed loops). 
    This procedure helps learn a $g$ that may be used to select high-utility experiments.}

    \label{fig:example_eubo}
\end{figure}

This new workflow includes a key challenge unaddressed by prior work: how should preference information be gathered to best support optimization in such real-world contexts? Responding to this question, we examine several \PE{} strategies and show that the most successful ones leverage information about the posterior distribution of outcomes achievable under $\ftrue$.  Finally, we develop a one-step Bayes optimal \PE{} strategy for collecting preference information to solve \eqref{eq:main}. 

Figure~\ref{fig:example_eubo} illustrates one of our proposed
\PE{} strategies, EUBO-$\tf$.
 Rather than performing preference learning across all possible outcome vectors, which would take many queries, EUBO-$\tf$ 
focuses on queries comparing outcomes 
that are likely to be achievable.
This guides experiments toward regions of high utility to the DM in a small number of queries.

\paragraph{Contributions}
Our contributions are as follows:
\begin{itemize}[leftmargin=1.5em]
    \item We propose Bayesian optimization with preference exploration (BOPE), a novel human-in-the-loop framework for BO of time-consuming experiments that generate vector-valued outcomes over which a DM has unknown preferences. This framework reduces experimental cost and DM time over existing MOBO and PBO approaches.

    \item We develop a principled \PE{} strategy for adaptively selecting queries to present to the DM. The DM's answers to these queries localize the parts of the achievable region of outcomes that are of importance to the DM, and enable selection of high-utility designs to evaluate via experimentation. 
    \item We evaluate our approach on synthetic and real-world simulation problems, such as multi-objective vehicle design, demonstrating that \PE-based approaches significantly outperform MOBO, PBO, and other natural baselines.
\end{itemize}

\section{OTHER RELATED WORK}

A related stream of research has developed techniques for efficient elicitation of the DM's preferences over outcomes that are known {\it a priori} without the need for expensive evaluation
\citep{chajewska2000making, boutilier2002pomdp, Furnkranz2003-mu, Chu2005-ec, viappiani2010optimal,  Dewancker2016-ix}. 
While we leverage ideas from these works, they are not directly applicable in our setting, where outcomes are unknown and expensive-to-evaluate. 
Additionally, most of this literature assumes that there are a finite number of outcome vectors, while our space of potential outcomes is uncountably infinite.

Our work builds on \citet{Astudillo2020-jg} and \citet{lin2020_icml_pref}. The former considers sequential BO of vector-valued functions on behalf of a DM with unknown preferences using the same decomposition of outcomes and utilities considered here and proposes a strategy for the experimentation stage only.
The present paper builds off of a previous workshop paper by \citet{lin2020_icml_pref} which considers a similar problem setup as \citet{Astudillo2020-jg} but focuses on  batch-based PE and experimentation to support the types of workflows found in industry A/B testing settings. The present paper formalizes this setup, and provides a principled solution to the problem.

Finally, due to the composite structure of problem \eqref{eq:main}, our work is also related to BO of composite objective functions~\citep{Astudillo2019-ar, astudillo2021bayesian, astudillo2021thinking}, and we leverage similar computational techniques.

\section{BO WITH PREFERENCE EXPLORATION}
Here, we describe our framework, including the workflow and core models, and introduce key considerations for the design of strategies for PE and experimentation.%

\subsection{Workflow}
In our framework, the goal is to find a solution to~\eqref{eq:main} by learning the functions $\gtrue$ and $\ftrue$ while alternating between two stages: \textit{preference exploration} and \textit{experimentation}.

A \PE{} stage is a short uninterrupted period of time in which the DM interactively expresses preferences over multiple pairs of outcome vectors that does not involve the collection of new values from $\ftrue$.
We refer to the pair of outcome vectors presented to the DM as a {\it query} and to the DM's answer as the {\it response}.

An experimentation stage is a period of time in which one or more evaluations of the outcome function $\ftrue$ are performed, typically in parallel. Our presentation assumes that \PE{} and experimentation stages alternate. As a special case, this can include situations where \PE{} is performed only once after an initial round of experimentation, so as to take the human out of the loop for subsequent experiments. For example, in the context of internet experimentation, an experimentation stage would entail running a batch of A/B tests in parallel and a \PE{} stage could be an interactive session with a data scientist where pairs of (potentially hypothetical) experimental outcomes are compared.

Once all PE and experimentation stages are complete, the DM is shown the outcome vectors for all experiments that have been performed, and the DM selects their preferred design.\footnote{In practice, the learned utility may be used to rank these items in advance to reduce the cognitive burden of sorting through all evaluated designs.}

\subsection{Outcome and Preference Models}
\label{sec:models}
We utilize two probabilistic surrogate models, one for the outcome function, $f_\true$, and one for the utility function, $g_\true$.

To model the outcome function, $\ftrue$, we use a multi-output Gaussian process (GP), $f$, characterized by a prior mean function $\mu_0^f:\X\rightarrow\R^k$, and a prior covariance function, $K_0^f:\X\times\X\rightarrow\R^{k\times k}$. Given a dataset of $n$ (potentially noisy) observations of the outcome function, $\D_n = \{(x_i, y_i)\}_{i=1}^n$, the probabilistic surrogate model of $\ftrue$ is then given by the posterior distribution of $f$ given $\D_n$, which is again a multi-output GP with mean and covariance functions $\mu_n^f:\X\rightarrow\R^k$ 
and $K_n^f:\X\times\X\rightarrow\R^{k\times k}$ that can be computed in closed form in terms of $\mu_0^f$ and $K_0^f$.

The utility function, $\gtrue$, is also modeled using a GP, $g$, which again requires specifying a prior mean function, $\mu_0^g:\R^k\rightarrow\R$, and a prior covariance function, $K_0^g:\R^k\times\R^k\rightarrow\R$.

Given a query $(y_1,y_2)$ constituted by two outcome vectors, we let $r(y_1, y_2) \in \{1,2\}$ indicate whether the DM preferred the first or second outcome vector offered. Following \cite{Chu2005-ec}, we assume that the DM's responses are distributed according to a probit likelihood of the form
$$\prob(r(y_1, y_2) = 1 \mid g(y_1), g(y_2)) = \Phi\left(\frac{g(y_1)-g(y_2)}{\sqrt{2}\lambda}\right),$$ where $\lambda$ is a hyperparameter that can be estimated along with the other hyperparameters of the model, and $\Phi$ is the standard normal CDF.

In our experiments, we use the Laplace approximation suggested by \cite{Chu2005-ec}, which results in an approximate posterior of $g$ that is again a GP.
When we have observed the results of $m$ queries, $\mathcal{P}_m = \left\{\left(y_{1,j}, y_{2,j}, r(y_{1,j}, y_{2,j})\right)\right\}_{j=1}^m$, we let $\mu_m^g$ and $K_m^g$ refer to the mean and covariance functions of this approximate GP posterior.

\subsection{Preference Exploration Strategies}
\label{sec:PE-strategy}

Preference exploration strategies aim to select queries $(y_1,y_2) \in \Y \times \Y$, where $\Y \subseteq \R^k$, so as to best support experiment selection. Here, we introduce three classes of PE strategies investigated in this work.

\paragraph{PE Strategies That Learn Preferences Over a Prior Region of Interest}
Our first class of PE strategies requires choosing a prior set $\Y$ likely to contain most or all of the achievable region $\ftrue(\X) = \{\ftrue(x) : x \in \X\}$, so that preferences over $\Y$ are highly informative of preferences over $\ftrue(\X)$.
It then chooses queries to learn DM preferences over $\Y$. In the simplest case, $\Y$ can be 
a hyperrectangle bounding a likely minimum and maximum for each outcome provided by the DM. Alternatively, $\Y$ could be estimated via a meta-analysis of related experiments, and could also incorporate information about how outcomes covary across experiments.

There is a tradeoff in choosing the bounds of $\Y$. Choosing $\Y$ to be too small risks excluding relevant potential outcomes. Choosing $\Y$ to be much larger than $\ftrue(\X)$ can cause over-exploration of areas not relevant to the optimization task. This may occur with hyper-rectangular $\Y$ if correlated outcomes make $\ftrue(\X)$ much smaller than $\Y$.

Two policies for generating queries to learn preferences over $\Y$ are: selecting queries uniformly at random over $\Y\times\Y$; and selecting queries by maximizing an active learning acquisition function (AF) such as Bayesian active learning by disagreement (BALD)~\citep{Houlsby2011-jf} over $\Y\times\Y$.

\paragraph{PE Strategies That Leverage Direct Experimental Data} Another class of PE strategies samples many plausible achievable regions based on data from $\ftrue$. For each such region $\Y$, it seeks to learn the DM's preferences over $\Y$.
Learning these preferences allows us to eventually learn the best point in the true achievable region, despite not knowing this region.

Probabilistic surrogate models provide a natural mechanism for sampling plausible achievable regions. In many PE strategies considered here, for each preference query, a GP sample path $\tf$ is first drawn from $f$, implying an associated achievable region $\Y = \tf(\X)$.
Then a query is designed to improve our knowledge of the DM's most preferred design in $\Y$.
Compared to the former class of PE strategies that learn preferences over a prior region of interest,
this approach aims to reduce the number of queries needed by learning a separate ranking over many smaller sets $\Y$. As we learn more about the achievable region through experimentation, the sequence of sampled sets $\Y$ concentrates, further reducing the size of the query space. %

We consider three strategies for learning preferences over $\Y = \tf(\X)$: random search, BALD, and a novel AF called EUBO which is introduced in \S\ref{sec:known} and aims to find the best query over a known set of outcomes. \S\ref{sec:eubo_zeta} also derives an approximation of a one-step optimal AF for BOPE that has a similar structure but uses a different choice of $\Y$.

\paragraph{Optimization of PE AFs} Several of the PE strategies discussed above require optimizing an AF $\alpha$ over $\Y\times\Y$. In several cases, $\Y$ can be written as $\Y = \{h(x) : x \in \X\}$ for some deterministic function $h$. This makes optimization convenient, since is allows for optimization over a known domain, $\X$: $\argmax_{y_1, y_2 \in \Y} \alpha(y_1, y_2) = \argmax_{x_1, x_2 \in \X} \alpha(h(x_1), h(x_2))$.
Thus, to find the maximizer of $\alpha(y_1,y_2)$ over $y_1, y_2 \in \Y$, it is sufficient to find the maximizer of $\alpha(h(x_1), h(x_2))$ over $x_1, x_2 \in \X$.

\subsection{Experiment Selection Strategies}
\label{sec:experimentation}
Here, we discuss the strategy we use to select the designs at which $\ftrue$ is evaluated during the experimentation stages. Since the focus of our work is on \PE{} strategies, we restrict our attention to a single experiment selection strategy. We propose a generalization of the expected improvement under utility uncertainty (EIUU) AF, introduced by \citet{Astudillo2020-jg}, to support noisy and parallel evaluations with a non-parametric utility model $g$. This AF integrates over the uncertainty of both $f$ and $g$ when selecting the design points.

Formally, for a batch of $q$ points $x_{1:q} = (x_1, \ldots, x_q) \in \X^q$, this AF is defined by
\begin{align*}
 \eiuu&(x_{1:q}) = \\
  &\E_{m,n}\left[\left\{\max g(f(x_{1:q})) -  \max g(f(X_n))\right\}^{+}\right],
\end{align*}
where $\E_{m,n}[\cdot] = \E[\cdot \mid \mathcal{P}_m, \D_n]$ denotes the conditional expectation given the data from $m$ queries and $n$ experiments,  $\{\cdot\}^+$ denotes the positive part function, and, making a slight abuse of notation, we define
$ \max g(f(x_{1:q})) = \max_{i=1,\ldots,q}g(f(x_i))$
and
$\max g(f(X_n)) = \max_{(x,y)\in\D_n}g(f(x)).$

$\eiuu$ can be straightforwardly implemented as a Monte Carlo AF by applying the reparametrization trick to both $f$ and $g$, and optimized via sample average approximation (SAA)~\citep{wilson2018maxbo,balandat2020botorch}. We refer the reader to \S\ref{sec:qneiuu_implementation} in the supplementary material (SM) for more details on the implementation of $\eiuu$.

\section{ONE-STEP OPTIMAL PREFERENCE EXPLORATION}
\vspace{-0.25em}
Here we present a principled \PE{} strategy derived using a one-step Bayes optimality analysis.
We begin by describing a one-step optimal strategy for learning preferences over a  known set of outcome vectors as well as results regarding its efficient computation.  We then propose a one-step optimal \PE{} strategy that formally accounts for uncertainty over the set of achievable outcomes.  This strategy is not practical for real-time learning, but we provide a faster principled approximation using insights developed in the case with known achievable outcomes.

\subsection{Preference Exploration Over a Known Set of Outcomes}
\label{sec:known}

Here we assume that the space of achievable outcomes is known and denote it by $\Y$.

To motivate our AF, we ask the following rhetorical question: If we had to offer a single outcome vector $y^*\in \Y$ to the DM using the available information so far, what would the right choice be? We argue that a sensible choice is $y^*$ so that the expected utility received by the DM is maximal; i.e., $y^*\in \argmax_{y\in \Y}\E_m[g(y)]$, where $\E_m$ denotes the conditional expectation given $\mathcal{P}_m$ (i.e., $\E_m[\cdot] = \E[\cdot\mid \mathcal{P}_m]$). Following this logic, if we were allowed to ask an additional query $(y_1, y_2)$ and observe the DM's response $r(y_1,y_2)$, $y^*$ would now be chosen so that $y^*\in \argmax_{y\in \Y}\E_{m+1}[g(y)]$, where  $\E_{m+1}[\cdot] = \E\left[\cdot\mid \mathcal{P}_m \cup \left\{\left(y_1, y_2, r(y_1,y_2)\right)\right\}\right]$. 
Thus, $$\max_{y\in \Y}\E_{m+1}[g(y)] - \max_{y\in \Y}\E_m[g(y)]$$ quantifies the difference in (expected) utility obtained by the DM due to the additional query. Our AF can now be defined as the expectation of this difference given the information available so far; i.e., 
\begin{align*}
V(y_1, y_2) &= \E_m\left[\max_{y\in \Y}\E_{m+1}[g(y)] - \max_{y\in \Y}\E_m[g(y)]\right],
\end{align*}
where the dependence of the right-hand-side on $(y_1, y_2)$ is made implicit by our notation $\E_{m+1}[\cdot] = \E\left[\cdot\mid \mathcal{P}_m \cup \left\{\left(y_1, y_2, r(y_1,y_2)\right)\right\}\right]$. We also note that the term $\max_{y\in \Y}\E_m[g(y)]$ does not depend on $(y_1, y_2)$ and thus can be disregarded when maximizing $V$.
This AF can be considered to be in the knowledge gradient family of AFs \citep{Frazier2018-qn}, because it values information according to its impact on the maximum posterior expected utility.

\subsubsection{Expected Utility of the Best Outcome}
The acquisition function $V$ defined above is challenging to maximize directly due to its nested structure, as is typical of knowledge gradient AFs.
Fortunately, the following theorem shows that maximizing $V$ is equivalent to maximizing another AF that is easier to optimize.

We define the \textit{expected utility of the best option\,(EUBO)},
\begin{equation*}
\AF(y_1, y_2) = \mathbb{E}_m\left[
\max\{g(y_1), g(y_2)\} 
\right],
\end{equation*}
where the expectation is over the posterior of the DM utility $g$ at the time the query is chosen. $V$ and $\AF$ are related via the following result.
\begin{theorem}
\label{thm1}
If $\lambda=0$, and the posterior mean $\mu^g_m$ and posterior covariance $K^g_m$ are both continuous, then
\begin{equation*}
\argmax_{y_1, y_2 \in \Y} \AF(y_1,y_2) \subseteq \argmax_{y_1, y_2 \in \mathbb{R}^k} V(y_1,y_2)
\end{equation*}
and the left-hand side is non-empty.
\end{theorem}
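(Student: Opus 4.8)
The plan is to reduce everything to a cleaner functional and then sandwich it. Since $\max_{y\in\Y}\E_m[g(y)]$ does not depend on $(y_1,y_2)$, write $V(y_1,y_2)=\tilde V(y_1,y_2)-c$ with $\tilde V(y_1,y_2):=\E_m\!\left[\max_{y\in\Y}\E_{m+1}[g(y)]\right]$ and $c:=\max_{y\in\Y}\E_m[g(y)]$, so that $\argmax V=\argmax\tilde V$ and it suffices to reason about $\tilde V$. The key use of $\lambda=0$ is that the probit likelihood degenerates to a noiseless comparison: $r(y_1,y_2)=1$ iff $g(y_1)\ge g(y_2)$ (ties broken by the same rule), so conditioning on $\mathcal{P}_{m+1}$ is exactly conditioning on the sign of $g(y_1)-g(y_2)$. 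I will prove (i) a lower bound $\tilde V(y_1,y_2)\ge\AF(y_1,y_2)$ valid for $y_1,y_2\in\Y$, and (ii) an upper bound $\tilde V(y_1,y_2)\le\max_{\Y\times\Y}\AF$ valid for all $y_1,y_2\in\R^k$, together with continuity of $\AF$ so the latter maximum is attained; combining these yields the claim.

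For (i), fix $y_1,y_2\in\Y$. Bounding the inner maximum by its values at the two points gives $\tilde V(y_1,y_2)\ge\E_m\!\left[\max\{\E_{m+1}[g(y_1)],\E_{m+1}[g(y_2)]\}\right]$. I claim the integrand equals $\E_{m+1}[\max\{g(y_1),g(y_2)\}]$ almost surely. On $\{r=1\}$ we have $g(y_1)\ge g(y_2)$ surely and $\mathbf{1}\{r=1\}$ is $\mathcal{P}_{m+1}$-measurable, so taking $\E_{m+1}$ of $(g(y_1)-g(y_2))\mathbf{1}\{r=1\}\ge0$ shows $\E_{m+1}[g(y_1)]\ge\E_{m+1}[g(y_2)]$ on that event; moreover $\max\{g(y_1),g(y_2)\}=g(y_1)\mathbf{1}\{r=1\}+g(y_2)\mathbf{1}\{r=2\}$ surely, so $\E_{m+1}[\max\{g(y_1),g(y_2)\}]=\E_{m+1}[g(y_1)]=\max\{\E_{m+1}[g(y_1)],\E_{m+1}[g(y_2)]\}$ on $\{r=1\}$; the event $\{r=2\}$ is symmetric. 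The tower property then gives $\tilde V(y_1,y_2)\ge\E_m[\max\{g(y_1),g(y_2)\}]=\AF(y_1,y_2)$. This is precisely the step that fails for $\lambda>0$: then $\mathcal{P}_{m+1}$ does not reveal the sign and Jensen runs the wrong way because $\max$ is convex.

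For (ii), fix any $y_1,y_2\in\R^k$ and $\varepsilon>0$. Since $r$ takes only the values $1,2$, pick deterministic $y_1^\varepsilon,y_2^\varepsilon\in\Y$ with $\E_m[g(y_i^\varepsilon)\mid r=i]\ge\max_{y\in\Y}\E_m[g(y)\mid r=i]-\varepsilon$. Using $\max\{g(y_1^\varepsilon),g(y_2^\varepsilon)\}\ge g(y_1^\varepsilon)\mathbf{1}\{r=1\}+g(y_2^\varepsilon)\mathbf{1}\{r=2\}$ and pulling the $\mathcal{P}_{m+1}$-measurable indicators through $\E_{m+1}$,
\[
\AF(y_1^\varepsilon,y_2^\varepsilon)\ \ge\ \sum_{i=1,2}\prob_m(r=i)\,\E_m[g(y_i^\varepsilon)\mid r=i]\ \ge\ \tilde V(y_1,y_2)-\varepsilon,
\]
and letting $\varepsilon\downarrow0$ gives $\max_{\Y\times\Y}\AF\ge\tilde V(y_1,y_2)$. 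Finally, $\AF(y_1,y_2)=\E_m[\max\{g(y_1),g(y_2)\}]$ is the expected maximum of a bivariate Gaussian whose mean and covariance are continuous in $(y_1,y_2)$ (by continuity of $\mu^g_m,K^g_m$); the standard closed form for this quantity is jointly continuous in those parameters (including at degenerate covariances, by a squeeze bound on $|\E\max-\max\text{mean}|$), so $\AF$ is continuous and attains its maximum on the compact $\Y\times\Y$. Picking $(y_1^\star,y_2^\star)\in\argmax_{\Y\times\Y}\AF$, for any $(y_1,y_2)\in\R^k\times\R^k$, $V(y_1,y_2)=\tilde V(y_1,y_2)-c\le\max_{\Y\times\Y}\AF-c=\AF(y_1^\star,y_2^\star)-c\le\tilde V(y_1^\star,y_2^\star)-c=V(y_1^\star,y_2^\star)$, which gives the inclusion and non-emptiness. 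The main obstacle is the almost-sure identity in step (i): cleanly justifying that $\lambda=0$ collapses the response to a sign observation and dispatching ties and degenerate $K^g_m$ (harmless, since ties have probability zero when $y_1\neq y_2$ and the covariance is non-degenerate, and are consistent with the tie-breaking rule otherwise); a secondary point is recording that $\Y$ is compact, which the non-emptiness conclusion needs.
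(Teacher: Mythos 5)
Your proof is correct and follows essentially the same route as the paper's: the same reduction to $W=V+\mathrm{const}$, the same two key inequalities ($\AF\le W$ on $\Y\times\Y$ by replacing the inner maximum with the DM-selected item, and $W\le\max_{\Y\times\Y}\AF$ via the conditional maximizers $y^*(y_1,y_2,i)$), and the same compactness-plus-continuity argument for non-emptiness. The only differences are cosmetic: you close with a direct sandwich rather than the paper's contradiction, and you use $\varepsilon$-maximizers to sidestep attainment of the inner supremum, which is a mild tightening.
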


Thus, the above result shows that, when DM responses occur without error, one can find a maximizer of $V$ by maximizing $\AF$ instead, which, as we argue below, is a significantly simpler task. While this holds for $\lambda = 0$ only, the following result shows that maximizing $\AF$ yields a high-quality solution even if $\lambda > 0$.

\begin{theorem}
\label{thm2}
Denote $V$ as $V_\lambda$ to make the dependence on $\lambda$ explicit, and let $(y_1^*, y_2^*) \in \argmax_{y_1, y_2}\AF(y_1, y_2)$. Then,
\begin{equation*}
    V_\lambda(y_1^*, y_2^*) \geq \max_{y_1, y_2 \in \Y}V_0(y_1, y_2) -\lambda C,
\end{equation*}
where $C=e^{-1/2}/\sqrt{2}$.
\end{theorem}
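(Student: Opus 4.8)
The plan is to control how much the value $V_\lambda$ can differ from $V_0$, uniformly over all queries $(y_1,y_2)$, and then combine this with Theorem~\ref{thm1}. Concretely, I would first establish a bound of the form $|V_\lambda(y_1,y_2) - V_0(y_1,y_2)| \le \lambda C$ for every pair $(y_1,y_2)$, with $C = e^{-1/2}/\sqrt{2}$. Given such a bound, the argument closes quickly: let $(y_1^\dagger, y_2^\dagger) \in \argmax_{y_1,y_2 \in \Y} V_0(y_1,y_2)$. By Theorem~\ref{thm1}, the EUBO-maximizer $(y_1^*, y_2^*)$ also maximizes $V_0$ over $\R^k \times \R^k$, hence $V_0(y_1^*,y_2^*) \ge V_0(y_1^\dagger, y_2^\dagger) = \max_{y_1,y_2 \in \Y} V_0(y_1,y_2)$. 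Then $V_\lambda(y_1^*,y_2^*) \ge V_0(y_1^*,y_2^*) - \lambda C \ge \max_{y_1,y_2 \in \Y} V_0(y_1,y_2) - \lambda C$, which is exactly the claim.

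So the real work is the uniform perturbation bound in $\lambda$. I would write $V_\lambda(y_1,y_2) = \E_m[\max_{y \in \Y} \E_{m+1}^\lambda[g(y)]] - \max_{y \in \Y}\E_m[g(y)]$, where the superscript $\lambda$ records that the updated posterior $\E_{m+1}^\lambda$ uses the probit likelihood with noise level $\lambda$; the subtracted term is $\lambda$-independent, so it suffices to bound the difference of the first terms. The key observation is that the DM's response $r(y_1,y_2)$ is a Bernoulli random variable whose success probability, marginally over the posterior on $g$, differs from the noiseless ($\lambda=0$) case. In the noiseless limit, $r$ is a deterministic function of $\mathrm{sign}(g(y_1)-g(y_2))$; for $\lambda>0$ it is flipped with a probability governed by $\Phi$. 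I would quantify the total-variation-type discrepancy between the $\lambda$ and $\lambda=0$ response distributions (conditionally on $g$), show that this discrepancy, integrated against the posterior, is at most $\lambda C$ — here the constant $e^{-1/2}/\sqrt{2}$ should emerge from optimizing a bound like $\E[\,|\,\tfrac{1}{2} - \Phi(Z/(\sqrt 2 \lambda))|\,] \le \lambda C$ or, more likely, from a bound on $\E[\varphi(\cdot)]$ type quantities, since $\max_t t\,\varphi(t) = 1/\sqrt{2\pi}\cdot$(something) and $e^{-1/2}$ is the Gaussian density's value at $\pm 1$ up to normalization. Then I would argue that changing the response distribution by $\lambda C$ in this sense changes $\E_m[\max_{y\in\Y}\E_{m+1}[g(y)]]$ by at most $\lambda C$, using that $\max_{y \in \Y}\E_{m+1}[g(y)]$, as a function of the (random) response, takes at most two values, each lying in a bounded range controlled by the prior, and more carefully that the two conditional posteriors themselves differ in a way that contracts.

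The main obstacle, I expect, is making precise the claim that perturbing the likelihood's noise parameter from $0$ to $\lambda$ perturbs the one-step value by $O(\lambda)$ with exactly the constant $C = e^{-1/2}/\sqrt 2$, rather than some larger constant. This requires a delicate argument: one must not bound things through crude total variation (which would lose the constant), but instead track the actual posterior means $\E_{m+1}[g(y)]$ as functions of $\lambda$ and differentiate or Taylor-expand in $\lambda$ near $0$, using $\Phi(t/(\sqrt 2 \lambda)) \to \mathbb{1}\{t>0\}$ and the fact that the derivative of $\Phi$ is $\varphi$, whose relevant extremal values produce $e^{-1/2}$. A secondary subtlety is that Theorem~\ref{thm1} requires the continuity hypotheses on $\mu^g_m, K^g_m$ and uses $\lambda = 0$ in defining $V_0$; I would need the $\argmax$ of EUBO to be attained (the nonemptiness part of Theorem~\ref{thm1}) so that $(y_1^*,y_2^*)$ is well-defined, and I should note that the statement of Theorem~\ref{thm2} implicitly inherits those hypotheses. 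I would present the perturbation lemma as a self-contained sublemma, prove the $\le \lambda C$ bound there, and then give the three-line deduction above.
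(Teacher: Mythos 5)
There is a genuine gap: your entire argument rests on the asserted perturbation bound $|V_\lambda(y_1,y_2)-V_0(y_1,y_2)|\le\lambda C$ (or at least its one-sided version), and you never prove it — you only sketch two candidate mechanisms (a total-variation comparison of the response distributions, or a Taylor expansion of the posterior update in $\lambda$) and yourself flag that making either precise with the exact constant is the ``main obstacle.'' Neither mechanism is likely to close. A TV-type argument needs to multiply the distributional discrepancy by $\sup\big|\max_{y\in\Y}\E_{m+1}[g(y)]\big|$, which is not bounded by $1$ (the GP $g$ has unbounded range), and it ignores that the conditional posteriors $\E_{m+1}^\lambda[\cdot\mid r=i]$ themselves change with $\lambda$, not just the probabilities of the two responses. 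Moreover, the constant $C=e^{-1/2}/\sqrt2$ does not arise from any statement about posterior perturbations: it comes from the elementary scalar inequality $\Phi(s)r_1+\Phi(-s)r_2\ge\max\{r_1,r_2\}-\lambda C$ with $s=(r_1-r_2)/(\sqrt2\lambda)$, i.e.\ from bounding the DM's expected regret $|r_1-r_2|\,\Phi(-|r_1-r_2|/(\sqrt2\lambda))=\sqrt2\lambda\, s(1-\Phi(s))$ via $s(1-\Phi(s))\le\tfrac12 e^{-1/2}$ (maximized near $s=1$). That is a statement about the utility lost when the DM picks the worse of the two offered items, not about how far the $\lambda$-posterior is from the $0$-posterior.

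The paper's proof avoids any uniform comparison of $V_\lambda$ with $V_0$. It introduces the intermediate quantity $\AF_\lambda(y_1,y_2)=\E[g(y_{r(y_1,y_2)})]$, the expected utility of simply committing to whichever item the (noisy) DM selects, and chains three one-sided inequalities: (i) $W_\lambda\ge\AF_\lambda$, because committing to the selected item is a feasible but suboptimal recommendation inside the inner $\max_{y\in\Y}$; (ii) $\AF_\lambda\ge\AF_0-\lambda C$, by integrating the scalar inequality above over the posterior on $(g(y_1),g(y_2))$; and (iii) at the EUBO maximizer, $\AF_0(y_1^*,y_2^*)=\max_{\Y\times\Y}\AF_0=\max_{\Y\times\Y}W_0$ by Theorem~\ref{thm1}. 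Note that step (iii) is where the specialness of $(y_1^*,y_2^*)$ is used: for a generic query one only has $\AF_0(y_1,y_2)\le W_0(y_1,y_2)$, which points the wrong way, so the chain closes \emph{only} at the EUBO maximizer. Your plan asks for a strictly stronger pointwise statement than the theorem requires and discards exactly the structural fact that makes the short proof work. The outer skeleton of your deduction (invoke Theorem~\ref{thm1} to transfer optimality of $\AF$ to $V_0$, then subtract $\lambda C$) is fine; the missing content is the lower bound on $V_\lambda(y_1^*,y_2^*)$ itself, and for that you need the $\AF_\lambda$ detour rather than a likelihood-perturbation lemma.
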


Theorem~\ref{thm2} provides a lower bound on the acquisition value of a maximizer of EUBO evaluated at $V$ in the presence of comparison noise.
The proofs of Theorems 1 and 2 can be found in the SM \S\ref{sec:theoretical}.

Maximizing $\AF$ is easier than $V$ for two reasons. First, computing $\AF$ does not require solving the inner optimization problem $\max_{y\in \Y}\E_{m+1}[g(y)]$ required by $V$. Second, $\AF$ can be expressed in closed form under the approximate Gaussian posterior of $g$ implied by the Laplace approximation, as shown in SM \S\ref{sec:analytic_eubo}.%

\subsection{Preference Exploration Under an Unknown Set of Achievable Outcomes}
\label{sec:pe_kg}
We now describe a \PE{} strategy that formally takes into account uncertainty on the set of achievable outcomes. To support our analysis, here we assume that evaluations of the outcome function are noise-free. 

As in the previous subsection, we derive a PE strategy using a one-step optimality analysis. Formally, this strategy selects the query $(y_1,y_2)$ that is optimal with respect to the following sequence of actions (which constitute one step):
\begin{enumerate}
    \item Select query $(y_1, y_2)\in\R^k\times\R^k$ and observe DM's response $r(y_1, y_2)$,
    \item Select design $ x\in\X$ and observe outcome $f(x)$,
    \item Obtain reward $\max_{i=1,\ldots, n+1}\E_{m+1,n+1}[g(f(x_i))]$, where we define $x_{n+1} = x$ and $\E_{m+1,n+1}[\cdot]$ denotes $\E\left[\cdot\mid \mathcal{P}_m \cup \left\{\left(y_1, y_2, r(y_1,y_2)\right)\right\}, \D_n \cup \left\{(x, f(x))\right\}\right]$.
\end{enumerate}
The optimal query can be found by solving $\max_{y_1,y_2\in\R^k} W(y_1,y_2)$, where
\begin{align*}
W&(y_1,y_2)  \\
    = & \E_{m,n}[\max_{x_{n+1}\in\X}\E_{m+1,n}[\max_{i=1,\ldots, n+1}\E_{m+1, n+1}[g(f(x_i))]]]
\end{align*}
and $ \E_{m+1, n}[\cdot] = \E\left[\cdot\mid \mathcal{P}_m \cup \left\{\left(y_1, y_2, r(y_1,y_2)\right)\right\}, \D_n\right]$

This strategy is similar in spirit to the one described in the previous subsection. However, there are two key differences, which originate due to the set of achievable outcomes being unknown. First, there is an additional action  between query selection and reward collection. Here, one additional evaluation of the outcome function is performed, thus enforcing the need to gather preference information to support experimentation (i.e., evaluations of the outcome function). Second, since the set of achievable outcomes is unknown, the reward is computed over the outcomes observed so far only.

\subsection{Efficient Approximate Maximization of $W$ Via a Single-Sample Approximation}
\label{sec:eubo_zeta}
Unsurprisingly, $W$ is quite hard to compute and optimize due to its nested structure. In principle, one could aim to adapt optimization strategies for lookahead AFs (see, e.g., \citealt{balandat2020botorch,jiang2020efficient}). However, these methods are quite computationally expensive, with run times on the order of several minutes per acquisition, making them impractical in the context of real-time \PE. Instead, we derive an efficient approximate optimization scheme based on a single-sample approximation of $W$ with respect to the uncertainty on $f(x_{n+1})$.

Leveraging the approximate GP distribution over $g$ induced by the Laplace approximation, we write $\E_{m+1, n+1}[g(f(x_i))]$ as $\mu_{m+1}^g(f(x_i))$ for $i=1,\ldots, n+1$,\footnote{We leverage this closed form expression to simplify our notation but this is not critical. Our analysis holds even if $\E_{m+1, n+1}[g(f(x_i))]$ does not have a closed form expression.} and define $\mu_{m+1,n}^* = \max_{i=1,\ldots, n}\mu_{m+1}^g(f(x_i))$. Applying the reparametrization trick on $f(x_{n+1})$, $W$ can be rewritten as
\begin{align*}
 W&(y_1,y_2) = \\
   & \E_{m,n}[\max_{x\in\X}\E_{m+1,n}[\max\{\mu_{m+1,n}^*, \mu_{m+1}^g(\zeta_n(x; Z))\}]],
\end{align*}
where  $\zeta_n(x; Z) = \mu_n^f(x) + C_n^f(x)Z$, $C^f_n(x)$ is the lower Cholesky factor of $K_n^f(x,x)$, and the (conditional) distribution of $Z$ is the standard normal. In the expression above, the inner expectation is over $Z$ and the outer expectation is over $r(y_1, y_2)$.

If we approximate the expression above using a single sample from $Z$, which we denote by $\tilde{Z}$, and making a slight abuse of notation, we obtain the approximation $W(y_1,y_2) \approx  W(y_1,y_2; \tilde{Z})$, where
\begin{align*}
W&(y_1,y_2; \tilde{Z}) =\\
     & \E_{m,n}[\max_{x\in\X}\max\{\mu_{m+1,n}^*, \mu_{m+1}^g(\zeta_n(x;\tilde{Z}))\}],    
\end{align*}
and $\tilde{Z}$ is deterministic in the expectation above. Moreover, since $\zeta_n(x_i;\tilde{Z}) = f(x_i)$ for previously evaluated points $i=1,\ldots, n$,
$\max_{x\in\X}\mu_{m+1}^g(\zeta_n(x;\tilde{Z})) \geq \mu_{m+1,n}^*$ and thus  $\max_{x\in\X}\max\{\mu_{m+1,n}^*, \mu_{m+1}^g(\zeta_n(x;\tilde{Z}))\}$ can be simplified to $\max_{x\in\X}\mu_{m+1}^g(\zeta_n(x;\tilde{Z}))$. Thus, 
\begin{align*}
 W(y_1,y_2; \tilde{Z}) = \E_{m,n}[\max_{x\in\X}\mu_{m+1}^g(\zeta_n(x;\tilde{Z}))].   
\end{align*}

We can now use the machinery developed in \S\ref{sec:known} to efficiently maximize $W(y_1,y_2; \tilde{Z})$. Concretely, if we let $\Y = \{\zeta_n(x;\tilde{Z}) : x\in\X\}$, it follows from Theorem~\ref{thm1} that $\argmax_{y_1, y_2 \in \Y}\AF(y_1, y_2) \subseteq \argmax_{y_1, y_2 \in \Y}W(y_1,y_2; \tilde{Z})$ when $\lambda = 0$. Analogously, Theorem~\ref{thm2} provides a guarantee on the quality of a query in $\argmax_{y_1, y_2 \in \Y}\AF(y_1, y_2)$ when $\lambda > 0$.  We call the resulting \PE{} strategy EUBO-$\zeta$.

While EUBO-$\zeta$ is derived as an approximation of a one-step optimal strategy, it has a structure similar to that of the strategies discussed in \S\ref{sec:PE-strategy}: it builds a set $\Y$ using $f$ and then selects a query in $\Y\times\Y$ by maximizing an AF. This perspective leads us to consider the PE strategy that chooses queries by maximizing $\AF(y_1,y_2)$ over $y_1,y_2 \in \tf(\X)$. We call the resulting strategy EUBO-$\tf$. This variation of EUBO provides a more intuitive interpretation than EUBO-$\zeta$.

\vspace{-0.25em}
\section{NUMERICAL EXPERIMENTS}
\vspace{-0.5em}
\label{sec:sim_study}

\begin{figure*}
    \centering
    \includegraphics[width=1\textwidth]{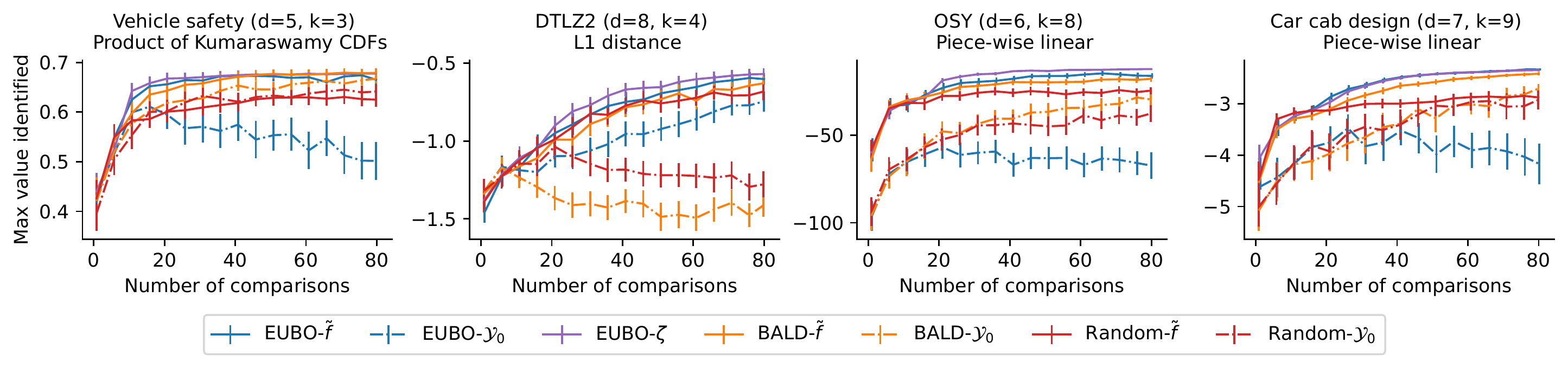}
    \vspace{-0.1in}
    \caption{
    Mean utility of designs chosen according to maximum posterior predictive mean after a given number of pairwise comparisons during the first stage of preference exploration. CIs are $\pm$2 standard errors of the mean across 100 simulation replications.
    }
    \label{fig:within_batch_sim}
\end{figure*}

\begin{figure*}[h!]
    \centering
    \includegraphics[width=1\textwidth]{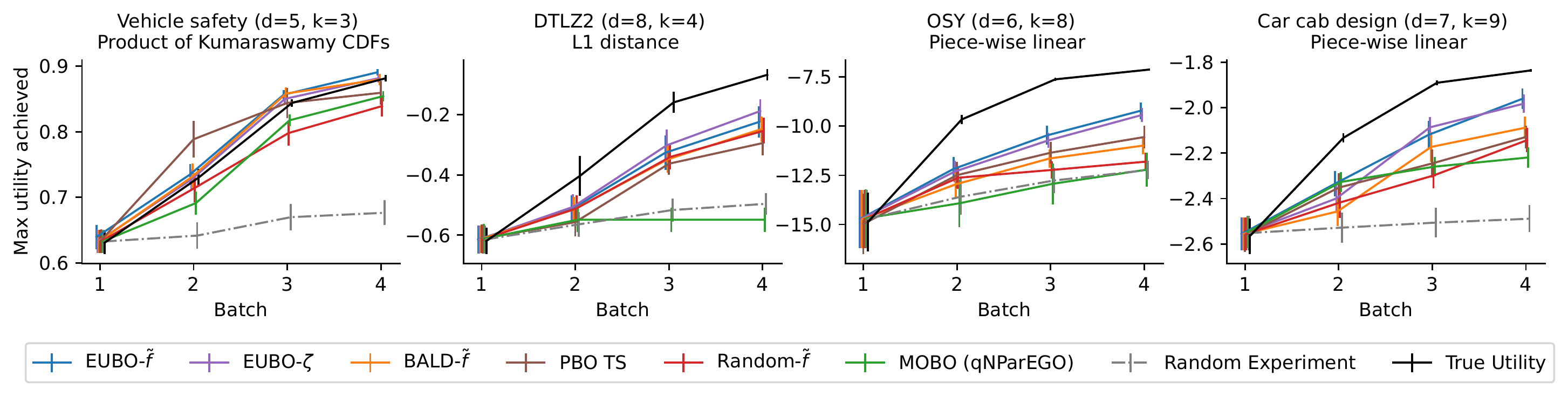}
    \vspace{-0.2in}
    \caption{
    Max utility achieved by interleaving batches of preference exploration and experimentation.
    ``True Utility'' shown in black represents an approximate upper bound on the performance achievable obtained via Bayesian optimization with known utility. CIs are $\pm$2 standard errors of the mean across 30 simulation replications.
    }
    \label{fig:multi_batch_sim} 
    \vspace{-0.1in}
\end{figure*}

We evaluate our proposed strategies on real-world and synthetic outcome functions as well as several utility functions.
The main text considers four test problems: vehicle safety ($d=5, k=3)$~\citep{liao2008multiobjective, tanabe2020easy},
DTLZ2~\citep{deb2005scalable} ($d=4, k=8)$, OSY ($d=6, k=8)$~\citep{deb2013evolutionary}, and car cab design ($d=7, k=9)$ problems~\citep{tanabe2020easy}.
These test problems are matched with several utility functions: linear and piece-wise linear utilities, the product of Kumaraswamy distribution CDFs, modeling soft constraints,
and the L1 distance from a Pareto-optimal point.
All test problems are described in detail in \S\ref{sec:test_functions} of the SM. Results are qualitatively similar to those presented here. Table~\ref{tab:complete_test_func} in the SM
summarizes all test outcome and utility functions considered in this work.

With these outcome and utility functions, we perform three types of simulation-based experiments.
\S\ref{sec:within_batch_sim} considers a single \PE{} stage. With data from a single batch of experiments, we train a surrogate outcome model $f$, then use \PE{} strategies to identify high utility designs in a single \PE{} stage. 
\S\ref{sec:mulit_batch_sim} considers
the case in which multiple PE and experimentation stages are interleaved, and
evaluates the maximum utility found over several rounds of BOPE for each \PE{} strategy.
Finally, \S\ref{sec:oneshot} considers a setting in which all preference exploration occurs in a single stage, and this is followed by multiple batches of experimentation without further intervention from the DM.
All simulations compare EUBO-based \PE{} strategies against several other \PE{} strategies.
Unless otherwise noted, we use the $\eiuu$ experiment selection strategy.

To emulate noise in preferences expressed by human DMs,
simulated DMs in all experiments select the option with lower utility 10\% of the time. In \S\ref{sec:probit_sims} of the SM, we also experiment with a different probit comparison error and observe similar results.

Complete simulation results with further baselines, test problems, and settings are available in  \S\ref{sec:additional_sims} of the SM.

\paragraph{Acquisition strategies for PE.} We examine several PE strategies described in \S\ref{sec:PE-strategy}.
This includes EUBO-$\zeta$ and methods selecting queries from one of two sets $\Y$: 
(i) a hyper-rectangle $\Y_0$ bounding $\ftrue(\X)$, and (ii)  $\tf(\X)$.
$\Y_0$ is constructed to be the smallest hyper-rectangle that contains $\ftrue(\X)$, estimated via $10^8$ Monte Carlo samples from $\X$. Thus, $\Y_0$ provides optimistic baseline of a DM that can perfectly characterize the upper and lower (box) bounds of $\ftrue$.
$\tf$ is sampled from the posterior on $f$ via random Fourier features with 512 basis functions~\citep{rahimi2007random}.

To select queries from these two sets $\Y$, we consider random search, BALD, and EUBO. We name these algorithms via their sampling strategy and choice of $\Y$: Random-$\Y_0$ and Random-$\tf$, and similarly for EUBO and BALD. BALD-$\Y$ (for both choices of $\Y$) is estimated via quasi-Monte Carlo (QMC), and  selects designs $x \in \X$ to reduce the posterior uncertainty of $g$ over $\Y$. Variants of EUBO are computed using the closed form expression derived in \S\ref{sec:analytic_eubo}  of the SM.

All algorithms are implemented in BoTorch~\citep{balandat2020botorch}. All AFs are optimized via SAA using L-BFGS-B. We use a Mat\'{e}rn 5/2 ARD covariance function for the outcome model and RBF ARD kernel for the preference model. We refer the reader to \S\ref{sec:sim_design_detail} of the SM for additional implementation details.

\subsection{Identifying High Utility Designs with PE}
\label{sec:within_batch_sim}
We first examine how the proposed \PE{} strategies identify design points whose outcomes have high utilities during a single \PE{} stage.
We first evaluate the outcome function at a batch of quasi-random design points and fit a multi-output GP to the observed data.
Outcome functions with five or fewer input dimensions receive 16 initial designs and the remainder receive 32. We then initialize the preference model using pairs of random designs  from the initial batch for the first 2$k$ pairwise comparisons, followed by comparisons acquired via  PE strategies. The outcome surrogate model remains unchanged.

We plot the true utility of our best guess at the utility-optimal design after every 5 pairwise comparisons. To do so, we maximize $\mathbb{E}[g(f(x))]$ over $x\in \X$ where the expectation is taken under the posterior given all available query responses and experiment results.
We maximize following a SAA approach, sampling over realizations of $g(f(x))$.  This provides the design $\hat{x}$ with the maximum posterior mean. We plot $\gtrue(\ftrue(\hat{x}))$ as the utility earned by a given \PE{} method after a given number of pairwise comparisons.

Figure~\ref{fig:within_batch_sim} shows results. EUBO-$\zeta$ and EUBO-$\tf$ perform at least as well as baseline strategies across all test problems on anytime performance, while BALD-$\tf$ achieves competitive but slightly inferior performance. EUBO-$\Y_0$ tends to over-explore outcomes that are not achievable under $\ftrue(\X)$, leading to models of $g$ that are progressively less accurate estimates of the posterior maximizer of $\gtrue(\ftrue(\cdot))$.
\vspace{-0.25em}
\subsection{BOPE with Multiple PE Stages}
\vspace{-0.25em}

\label{sec:mulit_batch_sim}

\begin{figure*}[h]
    \centering
    \includegraphics[width=1\textwidth]{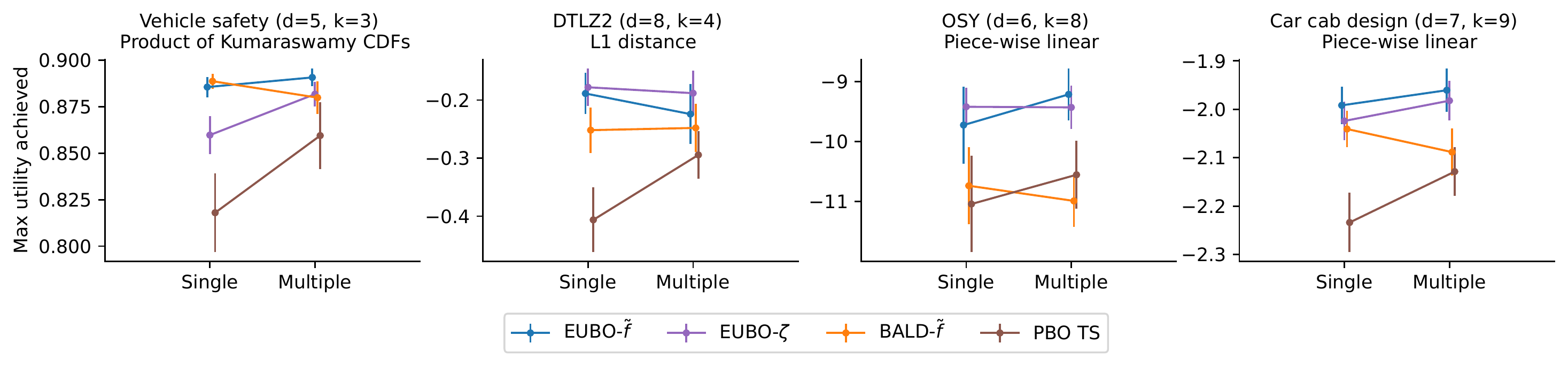}
    \vspace{-0.25in}
    \caption{Max utility achieved a single round or multiple rounds of PE after the last experimentation batch. Outcomes are only marginally improved through the use of multiple PE stages for EUBO-based acquisition functions, but PBO-based strategies benefit greatly from the ability to elicit preferences across multiple rounds of experimentation. The opposite effect is seen for active learning strategies, which benefit most from conducting all learning upfront. Plots show mean and $\pm$2 standard errors of the mean for 30 simulation replicates.}
    \label{fig:mt_interactive_oneshot_diff_in_util}
    \vspace{-0.15in}
\end{figure*}

We evaluate our proposed methods in a full BOPE loop with alternating rounds of experimentation and \PE. After each experimentation stage,  a \PE{} stage is performed using a \PE{} strategy. The learned preference model is then used to select designs for subsequent experiments. Each \PE{} stage elicits 25 pairwise comparisons from the DM. This occurs over 3 rounds of experimentation, leading to a total of 75 comparisons.

Following the previous subsection, the first experimentation stage uses 32 points generated with a Sobol sequence for higher-dimensional outcome functions ($d > 5$), and 16 otherwise.
For each subsequent batch, \PE{} is performed, a batch of 16 (or 8 for the vehicle safety problem) design points are generated for each subsequent experimentation stage (i.e., batches 2 - 4).
\PE{} strategies are used in combination with $\eiuu$ to select designs in batches 2-4. In addition to high-performing baselines examined in \S\ref{sec:within_batch_sim}, we include a few additional baselines.  

We adapt PBO to the BOPE setting as follows: For PE, we repeatedly apply PBO AFs to the results of previous experiments to elicit DM's responses over previously observed outcomes: $(y_i, y_j)$ with $y_i, y_j \in \{y:(x,y)\in\D_n)\}$. A standard GP model with a Laplace approximation is used to directly model the latent objective value; i.e., the mapping $x\mapsto \gtrue(\ftrue(x))$. We consider an adaptation of Thompson sampling, a popular AF used in the PBO literature~\citep{siivola2020preferential}. This strategy is performed by selecting $x \in \mathcal{D}_n$ with the  value based on independent samples from the posterior distribution of the latent objective. We refer to this strategy as PBO TS. Experiments are selected via a Monte Carlo implementation of the noisy expected improvement AF~\citep{letham2019constrained}, qNEI. We also considered a PBO AF based on EUBO, but the results were similar to those of PBO TS.

We additionally consider the multi-objective optimization algorithm qNParEGO~\citep{daulton2020differentiable}, single-objective BO with the ground truth utility function (True Utility), and a strategy where we experiment with random Sobol design points, which are approximate upper and lower bounds on a \PE{} strategy's performance. The True Utility is optimized with qNEI  using a compositional objective \citep{Astudillo2019-ar, balandat2020botorch}.

Figure~\ref{fig:multi_batch_sim} shows that in all test functions presented here, BO using EUBO-$\zeta$ and EUBO-$\tf$  consistently achieves the highest utility, only second to the ground truth utility. In our experiments, we find that PBO-TS
does not achieve the same level of performance as PE AFs. We also consider PE AFs that perform search directly in the entire $\Y_0$ domain in the SM.  Similar to the previous set of experiments, search directly in $\Y_0$ tends to perform worse than search in $\tf(\X)$. %

\vspace{-0.25em}
\subsection{BOPE with a Single PE Stage}
\label{sec:oneshot}
\vspace{-0.25em}

DM interruptions are costly in practice.
Therefore, we consider the case in which an initial experimentation stage takes place, followed by a single 75-comparison PE stage, after which experimentation proceeds in 3 batches of the same size as \S\ref{sec:mulit_batch_sim}.\footnote{The single \PE{} stage approach makes it feasible to use smaller batch sizes or fully sequential optimization, but we use identical batch sizes to allow for more direct comparison.}

While performing \PE{} only once minimizes DM interruptions, it may adversely impact the optimization since all learning occurs with a surrogate of a small amount of initial random design points.
Figure~\ref{fig:mt_interactive_oneshot_diff_in_util} compares the maximum utility achieved after performing four batches of experimentation, when performing all \PE{} after the first batch, vs performing \PE{} between rounds of experimentation.
For all PE-based methods, the maximum utilities achieved are not statistically different in both settings, suggesting they are robust to different levels of PE interactivity under our experimental setting.
On the other hand, we observe rather significant improvements in maximum utility achieved with multiple \PE{} stages for PBO-TS. In this case, we see that the best value achieved quickly plateaus after the second batch of experimentation   (Figure~\ref{fig:oneshot_multi_batch_sim} in SM). %

\section{CONCLUSION}
\vspace{-0.55em}
\label{sec:conclusion}
BO is a prominent method for 
sequential experimental design, often promising to ``take the human out of the loop''~\citep{shahriari2015taking}.
However, in practice, human DMs often struggle to describe the objective they wish to optimize.
This work proposes a novel human-in-the-loop BO framework with interleaved preference exploration stages called \emph{Bayesian optimization with preference exploration}, where humans and algorithms collaborate to learn DMs' preferences over plausible outcomes for a particular black-box optimization task. 

We propose EUBO, a simple and computationally efficient algorithm for exploring and learning the DM's preferences.
These learned preferences in turn enable efficient search for designs whose outputs have high utilities via BO. We show that EUBO-$\zeta$ is an approximate one-step optimal policy for learning a DM's preferences with respect to our current knowledge of both the DM's utility function and the outcome function, which allows us to focus preference exploration on the most relevant parts of the outcome space with respect to the current data.  
EUBO provides query efficiency improvements relative to benchmark methods, finding higher utility designs while consuming less DM time and requiring fewer experiments.

This work suggests areas for future research.  Real-time performance was enabled by approximate versions of the preference model and the optimal one-step optimal strategy $W$.  State-of-the-art models such as SkewGPs~\citep{benavoli2021preferential}, and more accurate approximations of $W$ could potentially improve PE sample complexity while still supporting real-time interaction.  BALD-$\tf$ is a strong and fast baseline, suggesting that new information-theoretic PE strategies could perform well in the BOPE framework.

\vspace{-0.25em}
\paragraph{Replication Material}
Code for replicating experiments in this paper is available at \url{https://github.com/facebookresearch/preference-exploration}

\subsubsection*{Acknowledgements}
{}PF and RA were supported by AFOSR FA9550-19-1-0283. We thank the anonymous reviewers as well as Sait Cakmak, Sam Daulton,  Daniel Jiang, Ben Letham, Yujia Zhang, and Yunxiang Zhang for their feedback on this work.  We thank Max Balandat for his review and guidance in the implementation of the preference model used in this work. We also thank Michael Shvartsman for his Monte Carlo-based implementation of BALD.

\bibliographystyle{plainnat}
\bibliography{ref}

\clearpage
\appendix

\thispagestyle{empty}

\onecolumn \makesupplementtitle

\section{DETAILS ON ACQUISITION FUNCTIONS}
\label{sec:sm_acqf_details}
\subsection{Implementation of qNEIUU}
\label{sec:qneiuu_implementation}
Here we describe our approach to compute and optimize the qNEIUU acquisition function. Succinctly, we follow the approach of ~\citet{balandat2020botorch}, which replaces the original acquisition function optimization problem
with a sample average approximation (SAA). The samples used within this SAA are obtained by applying the reparameterization trick \citep{wilson2018maxbo} to the posterior distributions on $f$ and $g$. The approximate computation of $\eiuu$ used within SAA is summarized in Algorithm~\ref{alg:eiuu}. This is implemented in BoTorch~\citep{balandat2020botorch}, and uses $N_g = 8$ samples from $g$ samples and $N_f = 32$ samples from $f$. As is standard for BoTorch AFs, we use quasi-Monte Carlo samples obtained via scrambled Sobol' sequences \citep{owen1998sobol}. Optimization is performed via L-BFGS-B.

\begin{algorithm}
\caption{Computation of qNEIUU}
\begin{algorithmic}[1]
    \label{alg:eiuu}
    \REQUIRE{$X$, a batch of $q$ design points to evaluate (a $q \times d$ matrix);\\
    \enspace\qquad $X_{\mathrm{obs}}$, the set of $n$ previously observed, potentially noisily observed points (a $n \times d$ matrix);\\
\enspace\qquad
$f$, a probabilistic surrogate model fitted on the experimental data $\D$ with $k$ outcomes;\\
    \enspace\qquad $g$, a probabilistic preference model fitted on preference feedback dataset $\mathcal{P}$;\\
    \enspace\qquad$N_{f}$, $N_{g}$, the number of MC samples from $f$ and $g$;
    }\\
    
    \vspace{0.5em}
    \STATE $[\tilde{Y}, \tilde{Y}_{\mathrm{obs}}] \gets \text{Draw } N_{f}~ \text{samples from}  f([x, x_{\mathrm{obs}}])$\\
        \# $\tilde{Y}$ is a tensor of size $N_{f} \times q \times k$ \\
    \# $\tilde{Y}_{\mathrm{obs}}$ is a tensor of size $N_{f} \times p \times k$ 

    \vspace{0.5em}
    \STATE $[\tilde{U}, \tilde{U}_{\mathrm{obs}}] \gets \text{Draw } N_{g}~ \text{samples from } g([\tilde{Y}, \tilde{Y}_{\mathrm{obs}}])$\\
    \# $\tilde{U}$ is a tensor of size $N_{g} \times N_{f} \times q$ \\
    \# $\tilde{U}_{\mathrm{obs}}$ is a tensor of size $N_{g} \times N_{f} \times p$ 
    \vspace{0.5em}
    
    \STATE $U^{*}_{i,j} \gets \max_{\ell=1,\ldots, q}\tilde{U}_{i,j,\ell}$
    \STATE $U^{*}_{\mathrm{obs}\;i,j} \gets \max_{\ell=1,\ldots, n}\tilde{U}_{\mathrm{obs}\;i,j,\ell}$

    \STATE   $\Delta_{i,j} := 
 \{U_{i,j}^{*} - U^{*}_{\mathrm{obs}\;i,j}\}^{+}$\\
    
    \STATE qNEIUU $\gets\tfrac{1}{N_g N_f}\sum_{i=1:N_g}{\sum_{j=1:N_f} \Delta_{i,j}}$\\
    
    \STATE \textbf{return} qNEIUU
\end{algorithmic}
\end{algorithm}

\subsection{Monte Carlo BALD}
We leverage a novel MC implementation of BALD~\citep{Houlsby2011-jf} that allows us to reparameterize the optimization of BALD in the design, rather than outcome space by propagating samples from $f(x)$ through $g$ into in our acquisition function. The computation of this acquisition function is summarized in Algorithm~\ref{alg:bald}. Here, $\Phi$ is the standard normal CDF and $H_b$ is the binary entropy function. This acquisition function is optimized via the standard BoTorch approach described in the previous subsection.

\begin{algorithm}
\caption{MC-BALD with $\Y$ search space}
\begin{algorithmic}[1]
    \label{alg:bald}
    \REQUIRE{$y_1$, $y_2$, a pair of comparison design points to evaluate;\\
    \enspace\qquad $g$, a probabilistic preference model\\
    \enspace\qquad$N_{MC}$, the number of MC samples;
    }\\
    
    \vspace{0.5em}
    \STATE $\mu, \sigma^2 \gets \text{mean and var of } g(y_1) - g(y_2)$
    
    \STATE $z_{\mathrm{posterior}} \gets \Phi(\frac{\mu}{\sqrt{\sigma^2 + 1}})$
    \STATE $h_{\mathrm{posterior}} \gets H_{b}(z_{\mathrm{posterior}})$
    \STATE $s \gets$ Draw $N_{MC}$ samples from $N(\mu, \sigma^2)$
    \STATE $z_{\mathrm{samples}} \gets \Phi(s)$
    \STATE $h_{\mathrm{\mathrm{conditional}}} \gets H_{b}(z_{\mathrm{samples}}).\mathrm{mean}()$
    
    \STATE \textbf{return} $h_{\mathrm{posterior}} - h_{\mathrm{conditional}}$
\end{algorithmic}
\end{algorithm}

\subsection{Uniform Sampling Over $\Y_0$}
For the Uniform Random baseline, we need to empirically determine the  bounds of each outcome.
To do so, we first sample a large number of random points ($10^8$ in this case) in $X \in \X$, and obtain an empirical sample of $Y = \ftrue(X)$ by evaluating the test response function.
Then we are able to identify the empirical lower and upper of $\ftrue$ in $\mathbb{Y}$, denoted as $Y_{\min}$ and $Y_{\max}$ respectively.
Finally, we scale (up or down depending on its sign and whether it's lower or upper bound) $Y_{\min}$ and $Y_{\max}$ by 10\% (or 20\% if one side of the bound is 0) to include additional potential boundary values that are not captured by our sampling scheme. This provides a generous uniform prior over the achievable set when used as a baselines in our experiments.

\section{THEORETICAL RESULTS}
\label{sec:theoretical}
All probabilities and expectations in this section are with respect to the posterior distribution on $g$ given $m$ DM queries. Previously we used the notation $\E_m$, but here we drop the subindex $m$ for brevity.
\subsection{Proof of Theorem 1}
\begin{reptheorem}{thm1}
 
Suppose that $\Y$ is compact, $\mu^g_m(y)$ and $K^g_m(y,y)$ are continuous functions of $y$, and $\lambda=0$. Then,
\begin{equation}
\argmax_{y_1, y_2 \in \Y} \AF(y_1,y_2) \subseteq \argmax_{y_1, y_2 \in \mathbb{R}^k} V(y_1,y_2),
\end{equation}
and the left-hand side is non-empty.
\end{reptheorem}
\begin{proof}
We first observe that the left-hand side is non-empty since  $\Y$ is compact
and $\AF$ is continuous.
Continuity of $\AF$ follows directly from the continuity of $\mu^g_m$ and $K^g_m$ along with \eqref{eq:EUBO-analytic} (note that $\Delta\Phi(\Delta/\sigma) + \sigma\varphi(\Delta/\sigma)$ is continuous at $\sigma=0$).

Recall that
\begin{equation*}
    V(y_1, y_2) = \E\left[\max_{y\in \Y}\E\left[g(y) \mid \left(y_1, y_2, r(y_1, y_2)\right)\right] - \max_{y\in \Y}\E[g(y)]\right].
\end{equation*}
Since $\max_{y\in \Y}\E[g(y)]$ does not depend on $(y_1,y_2)$, we have
\begin{equation*}
   \argmax_{y_1, y_2 \in \mathbb{R}^k} V(y_1,y_2) =  \argmax_{y_1, y_2 \in \mathbb{R}^k} W(y_1,y_2),
\end{equation*}
where 
\begin{equation*}
    W(y_1, y_2) = \E\left[\max_{y\in \Y}\E\left[g(y) \mid \left(y_1, y_2, r(y_1, y_2)\right)\right]\right].
\end{equation*}
Thus, it suffices to show that
\begin{equation}
\argmax_{y_1, y_2 \in \Y} \AF(y_1,y_2) \subseteq \argmax_{y_1, y_2 \in \mathbb{R}^k} W(y_1,y_2).
\label{eq:thm1_equiv}
\end{equation}

To show \eqref{eq:thm1_equiv}, we rely on an idea from \cite{viappiani2010optimal}. For $i\in\{1,2\}$ let
\begin{equation*}
 y^*(y_1,y_2, i) \in \argmax_{y\in \Y} 
\E\left[g(y) \mid (y_1, y_2, i)\right].   
\end{equation*}
Below, we show that 
\begin{equation}
\label{eq:lemma}
\AF(y^*(y_1,y_2, 1), y^*(y_1,y_2,2)) \ge W(y_1,y_2).
\end{equation}

For generic $y_1$, $y_2$, 
we also have 
\begin{equation}
\begin{split}
\AF(y_1, y_2) \le W(y_1, y_2)
\end{split}
\label{eq:lemma2}
\end{equation}
because the right-hand side is bounded below by replacing $\max_y \E\left[g(y) \mid \left(y_1, y_2, r(y_1, y_2)\right)\right]$ by
$\E[g(y_{r(y_1,y_2)})\mid \left(y_1, y_2, r(y_1, y_2)\right)]$ in the definition of $W$, i.e., by using the item that the DM selected in the query as our estimate of the item that the DM most prefers.

\newcommand{\ystar}{\tilde{y}^*}
\newcommand{\y}{\tilde{y}}

With these inequalities established, 
let $(y_1, y_2$) be in the right-hand side of \eqref{eq:thm1_equiv}.
Suppose for contradiction that 
$(y_1, y_2)$ is not in the left-hand side of \eqref{eq:thm1_equiv}. Then there must be some $(\y_1, \y_2)$ such that 
$W(\y_1, \y_2) > W(y_1, y_2)$.

Let $\ystar_i = y^*(\y_1, \y_2,i)$ for $i=1,2$. We have 
\begin{align*}
\AF(\ystar_1, \ystar_2) &\ge 
W(\y_1, \y_2)\\
& >  W(y_1, y_2)\\
& \ge \AF(y_1, y_2)\\ 
& \ge \AF(\ystar_1, \ystar_2).
\end{align*}
The first inequality is due to \eqref{eq:lemma}, which we show below. 
The second is due to our supposition for contradiction that $(\y_1, \y_2)$ had a strictly larger value of $W$ than $(y_1,y_2)$.
The third is due to \eqref{eq:lemma2}.
The fourth is because $(y_1$, $y_2)$ was chosen to maximize $\AF$ over $\Y\times\Y$ and because 
$\ystar_i \in \Y$ for both $i=1,2$.

This is a contradiction.  Thus, it must be that $(y_1, y_2)$ is also in the left-hand side of \eqref{eq:thm1_equiv}.

It now remains to show that \eqref{eq:lemma} holds. To this end, let $y^*_i = y^*(y_1,y_2,i)$.
Observe that 
\begin{align*}
W(y_1,y_2)
 &= \sum_i \prob(r(y_1,y_2)=i) \E[g(y^*_i) | (y_1,y_2, i)]
\end{align*}
and $g(y^*_i) \leq\max\{g(y^*_1), g(y^*_2)\}$ for both $i=1,2$. Thus,
\begin{align*}
W(y_1,y_2)
= & \sum_i \prob(r(y_1, y_2)=i) \E[g(y^*_i) | (y_1,y_2, i)]\\
\le {} &
\sum_i \prob(r(y_1,y_2)=i) \E[\max\{g(y^*_1), g(y^*_2)\} | (y_1,y_2, i)] \\
= {} & \E[\max\{g(y^*_1), g(y^*_2)\}]\\
= {} & \AF(y_1^*, y_2^*).
\hfill\qedhere
\end{align*}

\subsection{Proof of Theorem 2}
Here we prove Theorem 2. First we introduce additional notation and prove several auxiliary lemmas.

\begin{definition}
We define 
\begin{equation*}
 \AF_\lambda(y_1, y_2) = \E[g(y_{r(y_1,y_2)})],   
\end{equation*}
 where $r(y_1,y_2)$ is a random variable whose conditional distribution given $g(y_1)$ and $g(y_2)$ is given by 
 \begin{equation*}
     \prob(r(y_1, y_2) = 1 \mid g(y_1), g(y_2)) = \Phi\left(\frac{g(y_1)-g(y_2)}{\sqrt{2}\lambda}\right),
 \end{equation*}
 and $\prob(r(y_1,y_2) = 2 \mid g(y_1), g(y_2)) = 1 - \prob(r(y_1,y_2) = 1 \mid g(y_1), g(y_2)) $.
 
 Similarly, we also define
\begin{equation*}
    V_\lambda(y_1, y_2) = \E\left[\max_{y\in \Y}\E\left[g(y) \mid \left(y_1, y_2, r(y_1, y_2)\right)\right] - \max_{y\in \Y}\E[g(y)]\right],
\end{equation*}
and
\begin{equation*}
W_\lambda(y_1, y_2) = \E\left[\max_{y\in \Y}\E[g(y) \mid \left(y_1, y_2, r(y_1, y_2)\right)]\right]    
\end{equation*}
\end{definition}

The above definitions generalize the definitions of $\AF$, $V$, and $W$ to the case where the DM's responses are corrupted by probit noise.

The following inequality is key in our proof of Theorem 2.

\begin{lemma}
\label{lemma:probit_bound}
For any fixed  $r_1, r_2 \in \R$ and $\lambda > 0$,
\begin{equation*}
     \Phi\left(\frac{r_1 - r_2}{\sqrt{2}\lambda}\right) r_1 +\Phi\left(\frac{r_2 - r_1}{\sqrt{2}\lambda}\right) r_2 \geq \max\{r_1, r_2\} - \lambda C,   
\end{equation*}
where $C=e^{-1/2}/\sqrt{2}$.
\end{lemma}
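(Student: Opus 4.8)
The plan is to prove the pointwise inequality by reducing to a one-variable optimization. Fix $r_1, r_2 \in \R$ and $\lambda > 0$. Without loss of generality assume $r_1 \geq r_2$ (the expression is symmetric under swapping $r_1 \leftrightarrow r_2$, since $\Phi(-t) = 1 - \Phi(t)$), so that $\max\{r_1, r_2\} = r_1$. Set $t = (r_1 - r_2)/(\sqrt{2}\lambda) \geq 0$. Then the left-hand side minus $\max\{r_1,r_2\}$ equals $\Phi(t) r_1 + \Phi(-t) r_2 - r_1 = \Phi(-t)(r_2 - r_1) = -\Phi(-t)(r_1 - r_2) = -\sqrt{2}\lambda\, t\, \Phi(-t) = -\sqrt{2}\lambda\, t\,(1 - \Phi(t))$. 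So the claim becomes: $\sqrt{2}\lambda\, t\,(1-\Phi(t)) \leq \lambda C$ for all $t \geq 0$, i.e. $t(1-\Phi(t)) \leq C/\sqrt{2} = e^{-1/2}/2$ for all $t \geq 0$.

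The remaining step is a clean single-variable calculus bound. Define $\psi(t) = t(1 - \Phi(t))$ on $[0,\infty)$. We have $\psi(0) = 0$ and $\psi(t) \to 0$ as $t \to \infty$ (since $1 - \Phi(t)$ decays like $\varphi(t)/t$, so $\psi(t) \sim \varphi(t) \to 0$), so the maximum is attained at an interior critical point. Differentiating, $\psi'(t) = (1 - \Phi(t)) - t\varphi(t)$, where $\varphi$ is the standard normal density. Rather than solving $\psi'(t) = 0$ in closed form, I would instead use the standard Gaussian tail bound $1 - \Phi(t) \leq \varphi(t)/t$ for $t > 0$ (Mills' ratio inequality), which gives $\psi(t) = t(1 - \Phi(t)) \leq t \cdot \varphi(t)/t = \varphi(t) = \frac{1}{\sqrt{2\pi}} e^{-t^2/2}$. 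But this bound alone does not suffice since $\varphi(0) = 1/\sqrt{2\pi} > e^{-1/2}/2$. So I need a sharper argument near $t$ small.

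The cleanest route: note $\psi'(t) = (1-\Phi(t)) - t\varphi(t)$, and one checks $\psi'$ has a unique zero $t^*$ on $(0,\infty)$ — indeed $\psi'(0) = 1/2 > 0$, $\psi'(t) < 0$ for large $t$, and differentiating again shows $\psi''(t) = -\varphi(t) - \varphi(t) + t^2\varphi(t) = \varphi(t)(t^2 - 2)$, so $\psi'$ is decreasing on $(0,\sqrt 2)$ and increasing on $(\sqrt 2, \infty)$; combined with $\psi'(0) > 0$ and $\psi'(\infty) = 0^-$ this yields a unique sign change $t^* \in (0,\sqrt 2)$. At $t^*$, $1 - \Phi(t^*) = t^*\varphi(t^*)$, hence $\psi(t^*) = t^*(1-\Phi(t^*)) = (t^*)^2 \varphi(t^*) = \frac{(t^*)^2}{\sqrt{2\pi}} e^{-(t^*)^2/2}$. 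Now I bound this expression: the function $u \mapsto u\, e^{-u/2}$ (with $u = (t^*)^2$) is maximized at $u = 2$ with value $2e^{-1}$, giving $\psi(t^*) \leq \frac{2e^{-1}}{\sqrt{2\pi}} = \frac{2}{e\sqrt{2\pi}}$. Hmm — I should double-check this equals $e^{-1/2}/2$; it does not in general, so the actual value $t^*$ matters, or I need the tighter estimate. Actually the intended bound $e^{-1/2}/2 \approx 0.303$ versus $\frac{2}{e\sqrt{2\pi}} \approx 0.294$: the latter is smaller, so $\psi(t^*) \leq \frac{2}{e\sqrt{2\pi}} < \frac{e^{-1/2}}{2} = C/\sqrt 2$, which is exactly what we need.

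So the key steps, in order: (1) symmetrize and substitute to reduce the claim to $\psi(t) := t(1-\Phi(t)) \leq e^{-1/2}/2$ for $t \geq 0$; (2) show $\psi$ attains its max at an interior critical point $t^*$ via the sign analysis of $\psi'$ using $\psi''(t) = \varphi(t)(t^2-2)$; (3) at $t^*$ use the stationarity relation $1-\Phi(t^*) = t^*\varphi(t^*)$ to write $\psi(t^*) = (t^*)^2\varphi(t^*)$; (4) bound $(t^*)^2\varphi(t^*) = \frac{(t^*)^2}{\sqrt{2\pi}}e^{-(t^*)^2/2} \leq \frac{2}{e\sqrt{2\pi}} < \frac{e^{-1/2}}{2}$ via the elementary maximum of $u e^{-u/2}$. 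The main obstacle is step (2)–(3): being careful that the supremum is genuinely interior and that the stationarity substitution is valid, rather than the final numerical comparison, which is routine. An alternative that avoids locating $t^*$: combine the Mills bound $\psi(t) \leq \varphi(t)$ for, say, $t \geq 1$ (where $\varphi(t) \leq \varphi(1) = e^{-1/2}/\sqrt{2\pi} < e^{-1/2}/2$) with a crude bound like $\psi(t) \leq t(1-\Phi(0))\cdot(\text{something})$ on $[0,1]$ — but this is messier, so I would go with the critical-point argument.
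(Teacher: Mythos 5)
Your proof is correct, and after the common reduction it finishes by a genuinely different route than the paper. Both arguments begin identically: assume WLOG $r_1 \ge r_2$, substitute $s=(r_1-r_2)/(\sqrt{2}\lambda)$, and reduce the claim to the one-variable inequality $s\left[1-\Phi(s)\right] \le e^{-1/2}/2$ for $s\ge 0$. The paper then imports the Chernoff-type tail bound $\mathrm{erfc}(t)\le e^{-t^2}$, which gives $s[1-\Phi(s)]\le \tfrac{s}{2}e^{-s^2/2}$, and maximizes the right-hand side at $s=1$ to land exactly on $e^{-1/2}/2$. You instead avoid any external tail inequality: you show the supremum of $\psi(s)=s[1-\Phi(s)]$ is attained at an interior critical point $t^*$ (using $\psi''(t)=\varphi(t)(t^2-2)$ and the Mills bound to control the sign of $\psi'$), substitute the stationarity relation $1-\Phi(t^*)=t^*\varphi(t^*)$ to get $\psi(t^*)=(t^*)^2\varphi(t^*)$, and bound this by $\max_{u\ge 0} u e^{-u/2}/\sqrt{2\pi}=2/(e\sqrt{2\pi})$, which is indeed below $e^{-1/2}/2$ since $16<2\pi e$. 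Your route is self-contained and actually yields the slightly sharper constant $2/(e\sqrt{2\pi})\approx 0.293$ versus the paper's $e^{-1/2}/2\approx 0.303$ (so the lemma, and hence Theorem 2, could be stated with a marginally smaller $C$), at the cost of the extra care needed to justify that the supremum is interior and that the stationarity substitution is legitimate; the paper's version is shorter once the $\mathrm{erfc}$ bound is granted, and its constant is exactly tight for the intermediate bound $\tfrac{s}{2}e^{-s^2/2}$ it uses. One cosmetic note: your worry midway about whether $2/(e\sqrt{2\pi})$ equals $e^{-1/2}/2$ is a red herring — only the inequality is needed, and you do verify it — so the final write-up should state the comparison cleanly rather than as a check.
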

\begin{proof}
Without loss of generality we may assume that $r_1 \geq r_2$. Then, we want to  prove that
\begin{equation*}
  \Phi\left(\frac{r_1 - r_2}{\sqrt{2}\lambda}\right) r_1 +\Phi\left(\frac{r_2 - r_1}{\sqrt{2}\lambda}\right) r_2 \geq r_1 - \lambda C,
\end{equation*}
By recalling that $\Phi(t) = 1 - \Phi(-t)$ for any $t\in\R$, the inequality above can be further rewritten as
\begin{equation*}
 \Phi\left(\frac{r_1 - r_2}{\sqrt{2}\lambda}\right) r_1 +\left[1 -\Phi\left(\frac{r_1 - r_2}{\sqrt{2}\lambda}\right)\right] r_2 \geq r_1 - \lambda C.   
\end{equation*}
Arranging terms and letting $s = (r_1 - r_2)/\sqrt{2}\lambda$, we obtain the equivalent inequality
\begin{equation*}
    \frac{C}{\sqrt{2}} \geq s\left[1 - \Phi(s)\right].
\end{equation*}
Thus, it suffices to show that the above inequality holds for any $s\geq 0$. To this end, recall that
\begin{equation*}
 \Phi(s) = \frac{1}{2}\left[1 + \mathrm{erf}\left(\frac{s}{2}\right)\right], 
\end{equation*}
where $\mathrm{erf}$ is the Gauss error function. Thus, the above inequality is equivalent to
\begin{equation*}
    \frac{C}{\sqrt{2}} \geq \frac{s}{2}\mathrm{erfc}\left(\frac{s}{2}\right),
\end{equation*}
where $\mathrm{erfc} = 1 - \mathrm{erf}$ is the complementary Gaussian error function. Using the well-known inequality $e^{-t^2} \geq \mathrm{erfc}(t)$, which is valid for all $t > 0$ (see, e.g., \cite{chang2011chernoff}), it suffices to show that
\begin{equation*}
    \frac{C}{\sqrt{2}} \geq \frac{s}{2} e^{-\frac{s^2}{2}};
\end{equation*}
i.e., 
\begin{equation*}
    \frac{1}{2}e^{-\frac{1}{2}} \geq \frac{s}{2} e^{-\frac{s^2}{2}}.
\end{equation*}
The above inequality can be easily verified by noting that the right-hand side reaches its maximum value at $s=1$.
\end{proof}

The following result shows that the expected utility of a DM expressing responses corrupted by probit noise with parameter $\lambda$ is close to that of a DM expressing noiseless responses.
\begin{lemma}
\label{lemma:eubo_probit_bound}
The following inequality holds for any $y_1,y_2\in \Y$ and $\lambda > 0$:
\begin{equation*}
    \AF_\lambda(y_1, y_2) \geq \AF_0(y_1, y_2) - \lambda C,
\end{equation*}
where $C$ is defined like in Lemma~\ref{lemma:probit_bound}.
\end{lemma}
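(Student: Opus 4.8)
\textbf{Proof proposal for Lemma~\ref{lemma:eubo_probit_bound}.}

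The plan is to reduce the claim to a pointwise (in $g$) inequality and then apply Lemma~\ref{lemma:probit_bound}. First I would unfold the definitions: by the definition of $\AF_\lambda$, conditioning first on the realized values $g(y_1)$ and $g(y_2)$ and using the tower property over the randomness in $r(y_1,y_2)$,
\begin{equation*}
\AF_\lambda(y_1,y_2) = \E\!\left[\Phi\!\left(\tfrac{g(y_1)-g(y_2)}{\sqrt{2}\lambda}\right) g(y_1) + \Phi\!\left(\tfrac{g(y_2)-g(y_1)}{\sqrt{2}\lambda}\right) g(y_2)\right],
\end{equation*}
where the outer expectation is over the posterior on $g$. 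Similarly $\AF_0(y_1,y_2) = \AF(y_1,y_2) = \E[\max\{g(y_1),g(y_2)\}]$, which one can see by taking $\lambda \downarrow 0$ (or directly from the definition of $\AF$ in the main text).

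Next I would apply Lemma~\ref{lemma:probit_bound} with $r_1 = g(y_1)$ and $r_2 = g(y_2)$ — these are fixed real numbers once we condition on the draw of $g$ — to get, almost surely,
\begin{equation*}
\Phi\!\left(\tfrac{g(y_1)-g(y_2)}{\sqrt{2}\lambda}\right) g(y_1) + \Phi\!\left(\tfrac{g(y_2)-g(y_1)}{\sqrt{2}\lambda}\right) g(y_2) \geq \max\{g(y_1),g(y_2)\} - \lambda C.
\end{equation*}
Taking expectations over the posterior on $g$ on both sides and using monotonicity and linearity of expectation yields $\AF_\lambda(y_1,y_2) \geq \E[\max\{g(y_1),g(y_2)\}] - \lambda C = \AF_0(y_1,y_2) - \lambda C$, which is exactly the claim.

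The only genuine content here is Lemma~\ref{lemma:probit_bound}, which is already proved; everything else is bookkeeping. The one step that deserves a line of care is justifying the interchange of expectation and the conditional-expectation rewriting of $\AF_\lambda$ — i.e., checking integrability so that the tower property and the termwise inequality survive taking expectations — but this is routine given that $g$ has a Gaussian (Laplace-approximate) posterior with finite moments and $\Phi$ is bounded by $1$. So I do not anticipate a real obstacle; the main thing is simply to state the pointwise reduction cleanly before invoking Lemma~\ref{lemma:probit_bound}.
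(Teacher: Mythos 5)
Your argument is correct and is essentially identical to the paper's proof: both condition on $g(y_1),g(y_2)$ to write $\AF_\lambda$ via the probit-weighted combination, apply Lemma~\ref{lemma:probit_bound} pointwise, and take expectations over the posterior on $g$. No issues.
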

\begin{proof}
Note that
\begin{align*}
 \E[g(y_{r(y_1, y_2)})\mid g(y_1), g(y_2)] = {} & \Phi\left(\frac{g(y_1) - g(y_2)}{\sqrt{2}\lambda}\right) g(y_1) +
 \\ 
 & \Phi\left(\frac{g(y_2) - g(y_1)}{\sqrt{2}\lambda}\right) g(y_2).  
\end{align*}
Hence, from Lemma~\ref{lemma:probit_bound} it follows that
\begin{equation*}
\E[g(y_{r(y_1, y_2)})\mid g(y_1), g(y_2)] \geq \max\{g(y_1),g(y_2)\} - \lambda C.    
\end{equation*}
The desired result can now be obtained by taking expectations over $g(y_1)$ and $g(y_2)$ on both sides of the above inequality.
\end{proof}

Our last lemma simply shows that the function $W_\lambda$ dominates $\AF_\lambda$ for any $\lambda > 0$.

\begin{lemma}
\label{lemma:w_geq_eubo}
The following inequality holds for any $y_1,y_2\in \Y$ and $\lambda > 0$:
\begin{equation*}
 W_\lambda(y_1, y_2) \geq \AF_\lambda(y_1, y_2).   
\end{equation*}
\end{lemma}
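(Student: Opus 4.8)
The plan is to prove Lemma~\ref{lemma:w_geq_eubo} by the same one-step-estimator argument used in the proof of Theorem~\ref{thm1}, now carried out in the presence of probit noise. The point is that $W_\lambda(y_1,y_2) = \E\left[\max_{y\in\Y}\E[g(y)\mid(y_1,y_2,r(y_1,y_2))]\right]$ is defined by taking, for each realized response $i=r(y_1,y_2)$, the \emph{best} point in $\Y$ under the updated posterior, whereas $\AF_\lambda(y_1,y_2) = \E[g(y_{r(y_1,y_2)})]$ corresponds to the more naive choice of reporting the item $y_i$ that the DM happened to select. Since the naive choice is a feasible (but generally suboptimal) candidate for the inner $\max$, $W_\lambda$ must dominate $\AF_\lambda$.

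Concretely, first I would condition on the response: write
\begin{equation*}
W_\lambda(y_1,y_2) = \sum_{i\in\{1,2\}} \prob(r(y_1,y_2)=i)\,\E\!\left[\max_{y\in\Y}\E[g(y)\mid (y_1,y_2,i)] \;\middle|\; r(y_1,y_2)=i\right].
\end{equation*}
Next, for each fixed $i$, observe that $y_i\in\Y$ is one particular element of the set over which the inner maximum is taken, so
\begin{equation*}
\max_{y\in\Y}\E[g(y)\mid (y_1,y_2,i)] \;\ge\; \E[g(y_i)\mid (y_1,y_2,i)].
\end{equation*}
Plugging this lower bound into the previous display and using the tower property to collapse the nested conditional expectations gives
\begin{equation*}
W_\lambda(y_1,y_2) \;\ge\; \sum_{i\in\{1,2\}} \prob(r(y_1,y_2)=i)\,\E[g(y_i)\mid (y_1,y_2,i)] \;=\; \E[g(y_{r(y_1,y_2)})] \;=\; \AF_\lambda(y_1,y_2),
\end{equation*}
which is exactly the claim. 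Essentially this is the inequality~\eqref{eq:lemma2} from the proof of Theorem~\ref{thm1}, restated with the probit-noise response distribution substituted for the noiseless one; the argument never used $\lambda=0$.

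I do not anticipate a genuine obstacle here — the lemma is a soft "best beats a fixed feasible choice" comparison and the only care needed is bookkeeping with the conditional expectations (making sure the outer expectation in the definition of $W_\lambda$ and $\AF_\lambda$ is over $r(y_1,y_2)$, whose law is itself obtained by marginalizing over $g(y_1),g(y_2)$). If anything, the mild subtlety is integrability/well-definedness of $\max_{y\in\Y}\E[g(y)\mid\cdot]$, but that is already guaranteed under the standing assumptions ($\Y$ compact, posterior mean continuous) carried over from Theorem~\ref{thm1}. Once this lemma is in hand, the three lemmas together yield Theorem~\ref{thm2}: for $(y_1^*,y_2^*)\in\argmax \AF(y_1,y_2)$ (i.e.\ $\AF_0$), one chains $V_\lambda(y_1^*,y_2^*) = W_\lambda(y_1^*,y_2^*) - \max_{y\in\Y}\E[g(y)] \ge \AF_\lambda(y_1^*,y_2^*) - \max_{y\in\Y}\E[g(y)] \ge \AF_0(y_1^*,y_2^*) - \lambda C - \max_{y\in\Y}\E[g(y)]$, and then uses optimality of $(y_1^*,y_2^*)$ for $\AF_0$ together with the noiseless identity $\AF_0 = W_0$ (Theorem~\ref{thm1}'s ingredients) to bound $\AF_0(y_1^*,y_2^*) - \max_{y\in\Y}\E[g(y)] \ge \max_{y_1,y_2\in\Y} V_0(y_1,y_2)$.
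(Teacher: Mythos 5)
Your proof is correct and is essentially identical to the paper's: the paper likewise lower-bounds the inner $\max_{y\in\Y}\E[g(y)\mid r(y_1,y_2)]$ by the feasible choice $y=y_{r(y_1,y_2)}$ and then collapses the nested expectations via the tower property, which is exactly your ``best beats a fixed feasible choice'' argument (the paper just keeps the conditioning on $r(y_1,y_2)$ abstract rather than expanding the sum over $i\in\{1,2\}$). No gap; the observation that $y_1,y_2\in\Y$ is what makes $y_{r(y_1,y_2)}$ feasible for the inner maximum is the only point of substance and you have it.
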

\begin{proof}
We have
\begin{align*}
    W_\lambda(y_1, y_2) &= \E\left[\max_{y\in \Y}\E[g(y) \mid r(y_1, y_2)]\right]\\
    & \geq \E\left[\E[g(y_{r(y_1, y_2)}) \mid r(y_1, y_2)]\right]\\
    &= \E[g(y_{r(y_1, y_2)})]\\
    &= \AF_\lambda(y_1, y_2).
\end{align*}
\end{proof}

We are now in position to prove Theorem 2.

\begin{reptheorem}{thm2}
Let $(y_1^*, y_2^*) \in \argmax_{y\in \Y} \AF(y_1,y_2)$. Then,
\begin{equation*}
 V_\lambda(y_1^*, y_2^*) \geq \max_{y_1,y_2\in \Y}V_0(y_1, y_2) - \lambda C,  
\end{equation*}
where $C$ is defined like in Lemma~\ref{lemma:probit_bound}.
\end{reptheorem}
\begin{proof}
We have
\begin{align*}
W_\lambda(y_1^*, y_2^*) &\geq  \AF_\lambda(y_1^*, y_2^*)\\
&\geq \AF_0(y_1^*, y_2^*) - \lambda C\\
& = \max_{y\in \Y} \AF_0(y_1,y_2) - \lambda C\\
& = \max_{y\in \Y} W_0(y_1,y_2) - \lambda C,
\end{align*}
where the first line follows from Lemma~\ref{lemma:w_geq_eubo}, the second line follows from Lemma~\ref{lemma:eubo_probit_bound}, the third line follows from the definition of $(y_1^*, y_2^*)$, and the fourth line follows from Theorem~\ref{thm1}. The desired result can now be obtained by subtracting $\max_{y\in \Y}\E[g(y)]$ from both sides of the inequality.
\end{proof}

\subsubsection{Closed Form Expression of EUBO Under a Gaussian Posterior}
\label{sec:analytic_eubo}
To compute $\AF$ in closed form, recall that $\mu_m^g$ and $K_m^g$ are the mean and posterior covariance of the approximate GP posterior after $m$ queries. We rewrite  $\AF$ as
\begin{align*}
\AF(y_1,y_2) &= \E[\{g(y_1) - g(y_2)\}^+ + g(y_2)]\\
&= \E[\{g(y_1) - g(y_2)\}^+] + \mu_m^g(y_2).
\end{align*}

Now let $\Delta(y_1,y_2)$ and $\sigma^2(y_1,y_2)$ be the mean and variance of $g(y_1) - g(y_2)$:
\begin{equation*}
\Delta(y_1,y_2) 
= \mathbb{E}[g(y_1) - g(y_2)]
= \mu_m^g(y_1) - \mu_m^g(y_2)
\end{equation*}
and 
\begin{align*}
\sigma^2(y_1, y_2) 
&= \mathrm{Var}[g(y_1) - g(y_2)]\\
&= K_m^g(y_1,y_1) + K_m^g(y_2,y_2) - 2K_m^g(y_1,y_2).
\end{align*}
Using the standard formula for the expectation of the positive part of a normal random variable with a given mean and variance, and  dropping $y_1,y_2$ from the arguments to $\Delta$ and $\sigma$ for brevity, we get
\begin{equation*}
   \E[\{g(y_1) - g(y_2)\}^+] = \Delta\Phi\left(\frac{\Delta}{\sigma}\right) +
\sigma\varphi\left(\frac{\Delta}{\sigma}\right),
\end{equation*}
and thus
\begin{equation}
\label{eq:EUBO-analytic}
\AF(y_1,y_2) = 
\Delta\Phi\left(\frac{\Delta}{\sigma}\right) +
\sigma\varphi\left(\frac{\Delta}{\sigma}\right) + 
 \mu_m^g(y_2),
\end{equation}
where $\varphi$ and $\Phi$ are the standard normal PDF and CDF, respectively.

\end{proof}

\section{SIMULATION DESIGN AND IMPLEMENTATION DETAILS}
\label{sec:sim_design_detail}
\subsection{Choice of Batch Sizes and Number of DM Queries}
Our simulation experiments are configured to mimic aspects of real-world experiments in which DMs may wish to perform Bayesian optimization using preference models.  To do this, we rely on pilot studies related to early versions of the PE strategies developed in this work. Participants in the pilots were data scientists and ML engineers at Meta who routinely used Bayesian optimization to tune recommender system ranking policies. Standard A/B tests consider a large number of number of initial  design points that are on the order of 3-5x the number of input dimensions, which motivates the initial batch sizes used in our simulation experiments. Subsequent experiments tend to use approximately half the number of points. To decide on what a sensible number of preference queries would be, we analyzed data from the pilot study to find that participants spent on average 7.8-12.7 seconds to perform pairwise comparisons between between problems with fairly large numbers of outputs (see Table~\ref{tab:pilot_responses}).
Based on this, we would estimate that it would conservatively take DMs around 15 minutes to perform 75 comparisons, or 5 minutes per PE stage if split across three rounds of experimentation.  Finally, we observed that the empirical error rate was approximately 10\%, and so we used this value for our DM noise model in the MT.

\begin{table}[h]
 \centering
 \begin{tabular}{c|c|c|c|c}
 Pilot & $d$ & $k$ & Response mean (s) & Response sd (s)\\
 \midrule
     1 &     11 &      9 & 10.7 & 5.7  \\
     2 &      8 &      9 & 7.8 & 3.0  \\
     3 &      4 &      4 & 8.6 & 6.0  \\
     4 &      8 &      6 & 12.7 & 8.1 \\
     5 &      8 &      6  & 8.8 & 7.2 \\
 \end{tabular}
 \caption{
 Summary statistics of input dimensionality ($d$), number of outcomes ($k$), and response times of data scientists in seconds to pairwise preference learning comparisons from pilot five studies.}
 \label{tab:pilot_responses}
\end{table}

\section{ADDITIONAL SIMULATION RESULTS}
\label{sec:additional_sims}
\subsection{Additional Test Problems}

We consider additional simulation environments here.
There are four outcome and utility function combinations presented in the main paper and additional four in the supplementary material, totalling eight simulation environments. These include surrogates of real-world simulators (Vehicle Safety and Car Cab Design) and widely used synthetic functions (OSY and DTLZ2), alongside various utility functions.
These test problems have 5-8 input dimensions ($d$), and 4-9 outcomes ($k$).
Table~\ref{tab:complete_test_func} shows the full list of simulation environments we used in this paper.
\S\ref{sec:test_functions} describe those outcome and utility in detail.

\subsection{Identifying High-Utility Designs with Preference Exploration}
Figure~\ref{fig:sm_within_batch_sim} Shows the experiment results for identifying high-utility designs with PE for all suite of test problems.
While most results hold similar to what we observe in the main text, EUBO-$\zeta$ and EUBO-$\tf$ are performing dramatically better compared to other methods in the OSY with exponential function sum with sigmoid constraints problem.
This is possibly because of the unique characteristic of this problem where many points in $\Y$ are violating the constraints and result in near-zero values.
Comparing those near-zero utility values, while potentially helpful for learning the shape of utility functions within the achievable region, is not necessarily contributing much to the identification of the maximizer of $\gtrue(\ftrue(\X))$.

\begin{figure}[h]
    \centering
    \includegraphics[width=1\textwidth]{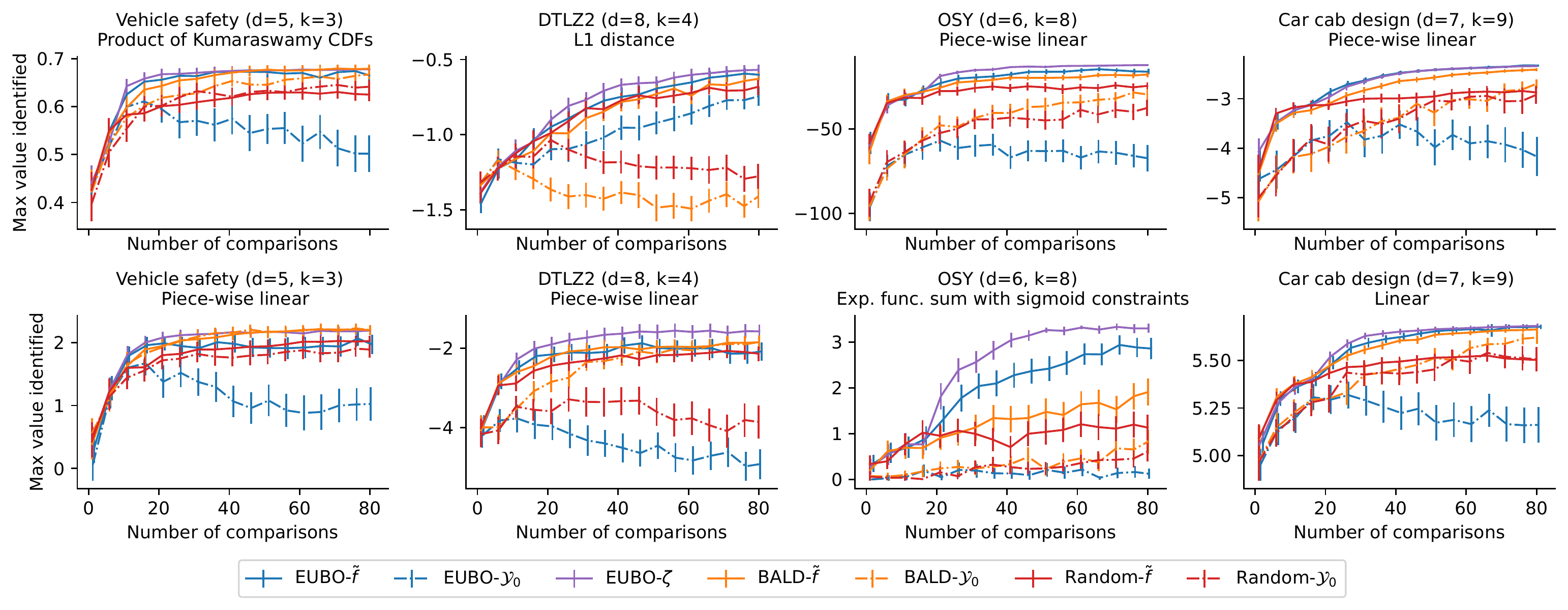}
    \caption{
    Simulation results for identifying high-utility designs with PE for all suite of test problems.
    Plotted are values of the maximum posterior predictive mean after a given number of pairwise comparisons during the first stage of preference exploration.
    }
    \label{fig:sm_within_batch_sim}
\end{figure}

\subsection{Preference Exploration with Multiple PE Stages }
 BOPE with multiple stages for additional test problems are plotted in Figure~\ref{fig:all_multi_batch_sim}. Figure~\ref{fig:all_multi_batch_sim} shows the results of BOPE with multiple PE stages, and the results largely align what we observe in the main text where EUBO-$\zeta$ and EUBO-$\tf$ consistently performing well across all problems.

\begin{figure}
    \centering
    \includegraphics[width=1\textwidth]{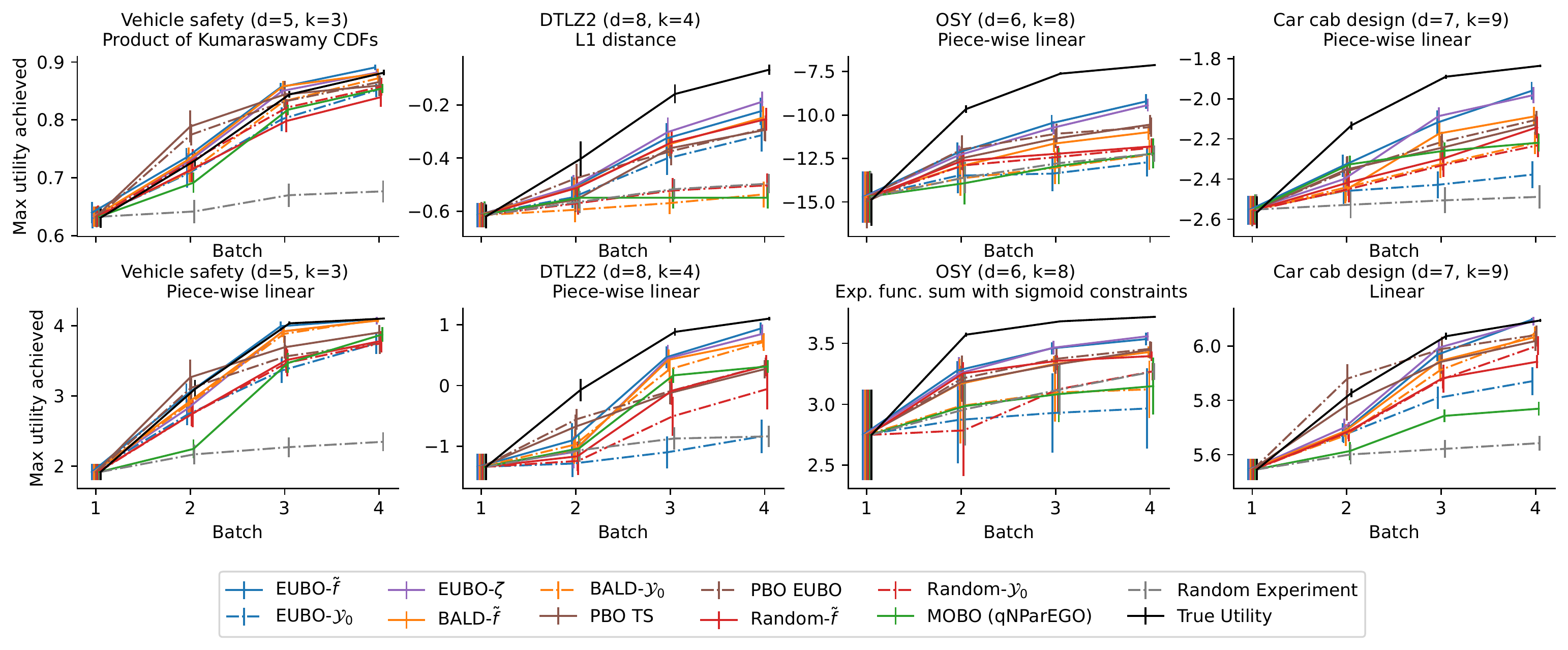}
    \caption{BOPE performance with multiple \PE{} stages for all benchmark problems.
    ``True Utility'', the grey line on each plot, represents an approximate upper bound on the performance achievable via Bayesian optimization with a known utility function, blue and orange show our proposed methods, and other lines show competing baseline methods.
    The top row includes outcome and utility functions whose maximum utility achieved are plotted in the main paper in Figure~\ref{fig:multi_batch_sim} and the bottom row shows maximum utility achieved for the additional set of outcome and utility functions.
    }
    \label{fig:all_multi_batch_sim} 
\end{figure}

\subsection{BOPE with One Preference Exploration Stage}

Figure~\ref{fig:interactive_oneshot_diff_in_util} directly shows the differences in final outcomes achieved by each method by comparing these two PE schemes.  In addition to the insights from the main text, one can also see that PE AFs that perform search of $\Y_0$, rather than based on the posterior of $f$, tend to perform much better when all learning occurs upfront.  This makes intuitive sense, since $\Y_0$ is agnostic to any additional updated surrogate models $f_n$ collected throughout the experimentation process. 

Finally, Figure~\ref{fig:oneshot_multi_batch_sim} shows the full optimization trajectory for the single \PE{} stage case.  From these plots it is apparent that PBO-based strategies make little  progress in the optimization goal after the second stage.

\begin{figure}
\centering
    \centering
    \includegraphics[width=\textwidth]{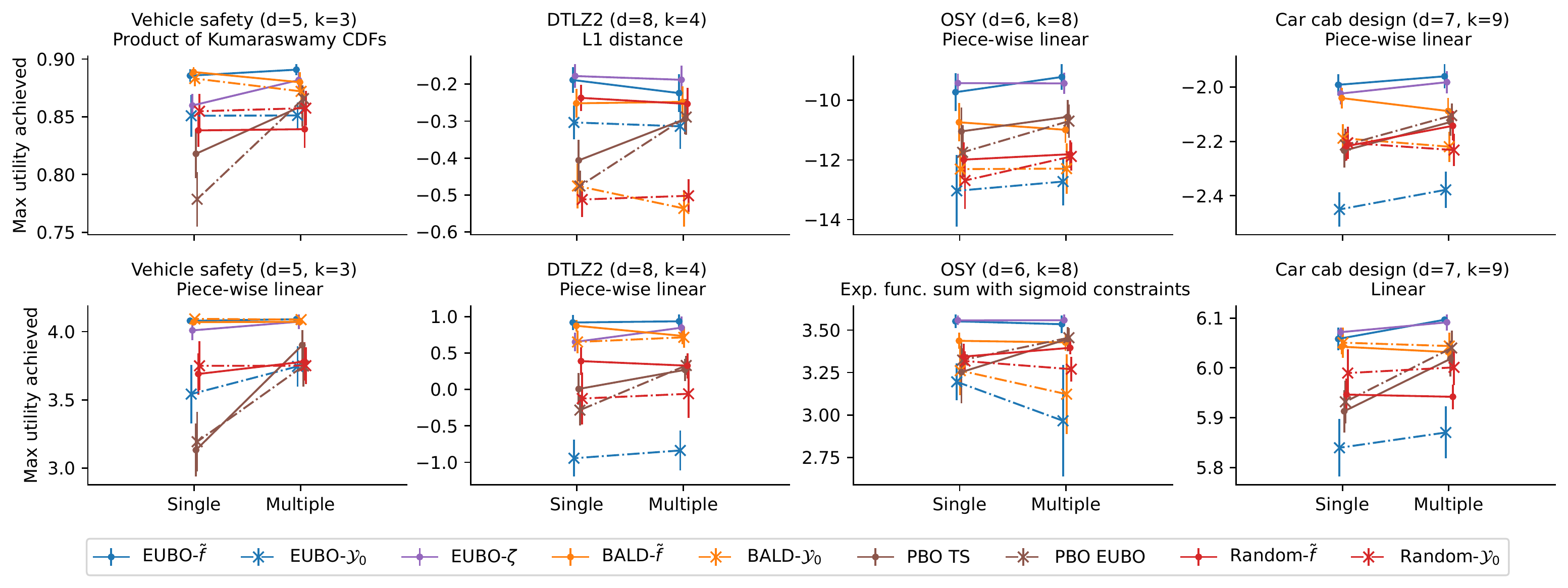}
    \caption{
    Maximum utility achieved after the last batch for BOPE with single PE stage and multiple PE stages for all benchmark problems.
    }
    \label{fig:interactive_oneshot_diff_in_util}
\end{figure}

\subsection{Probit Comparison Noise}
\label{sec:probit_sims}
In all simulation studies we have presented so far, we have been considering a constant 10\% error rate.  However, it is plausible that DMs may make errors in ways that vary with the utilities. Here, we consider the case where DMs are more likely to make mistakes when utilities have similar values, and study the behavior of the optimization strategies when such noise is present. 

Concretely, we use the probit likelihood with noise level $\lambda > 0$ introduced in \S\ref{sec:models}. Thus, when the DM is presented with a query  $(y_1, y_2)$,  $y_1$  is chosen with probability $\Phi\left(\frac{g(y_1)-g(y_2)}{\sqrt{2}\lambda}\right)$ and $y_2$ is chosen with probability $1 - \Phi\left(\frac{g(y_1)-g(y_2)}{\sqrt{2}\lambda}\right)$. For each experiment we set $\lambda$ so that  the DM makes a comparison error with probability 0.1 on average when presented with queries constituted by outcomes of designs that are in the top decile of $\gtrue(\ftrue(\X))$.
Table~\ref{tab:probit_error_rate} shows the average empirical comparison error rates observed under such probit noises.

Figure~\ref{fig:probit_comp_noise} shows optimization performance for BOPE with multiple PE stages for the four test problems presented in the main text using EUBO-$\tf$,  EUBO-$\zeta$, BALD-$\tf$, PBO TS, and PBO EUBO.
Similar to what we observe in the main text, EUBO-$\zeta$ and EUBO-$\tf$ perform the same or better than all other PE and PBO baselines considered.

\begin{figure}[h]
    \centering
    \includegraphics[width=\textwidth]{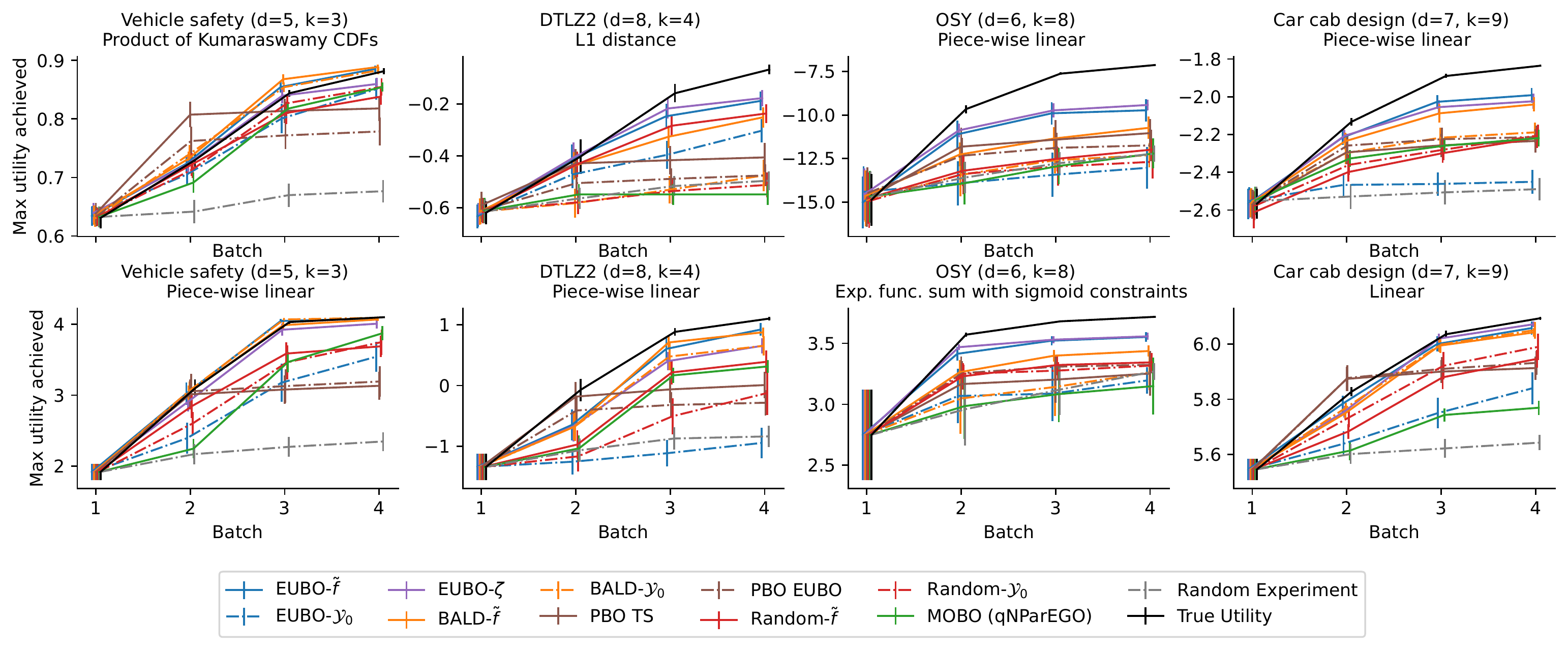}
    \caption{
    BOPE with a single \PE{} stage full optimization trajectory for all benchmark problems.
    }
    \label{fig:oneshot_multi_batch_sim}
\end{figure}

\begin{figure}[h]
    \centering
    \includegraphics[width=1\textwidth]{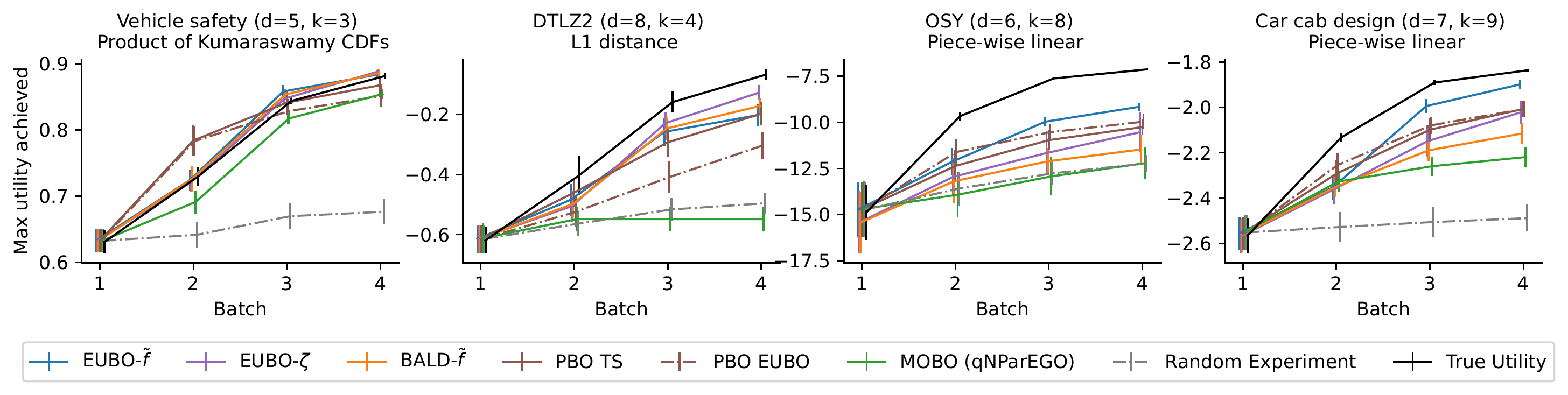}
    \caption{
    BOPE with multiple PE stages under probit comparison noise.
    }
    \label{fig:probit_comp_noise}
\end{figure}

\begin{table}[h]
\begin{tabular}{l|rrrrr}
\toprule
Test Problem & BALD-$\tf$ & EUBO-$\tf$ & EUBO-$\zeta$ & PBO EUBO & PBO TS \\
\midrule
Vehicle safety, product of Kumaraswamy CDFs &            12.8\% &             5.1\% &        10.0\% &     7.9\% &   7.8\% \\
DTLZ2, L1 distance &              3.0\% &             7.0\% &         6.9\% &     9.1\% &   4.1\% \\
OSY, piece-wise linear                &             11.3\% &            14.0\% &        20.1\% &    15.8\% &   8.0\% \\
Car cab design, piece-wise linear     &            12.1\% &             8.5\% &        17.9\% &    10.2\% &   7.3\% \\
\bottomrule
\end{tabular}
    \caption{Empirical comparison error rate under probit noise levels considered in this subsection.%
    }
    \label{tab:probit_error_rate}
\end{table}

\newpage

\section{OUTCOME AND UTILITY FUNCTIONS}
\label{sec:test_functions}

In this section, we describe all outcome and utility functions used in our simulation studies.

\subsection{Outcome Functions}
\subsubsection{DLTZ2}
DTLZ2 function was first introduced by \citet{deb2005scalable}, allowing for arbitrary input dimension $d$ and output dimension $k$ subject to $d > k$.
$\mathbb{X} = [0, 1]^d$.
For a DLTZ2 function $f$ with $k$-dimensional output, we have:

\begin{align*}
m &= d - k + 1\\
g(x) &= \sum_{i=m}^{d-1} (x_i - 0.5)^2\\
f_{j}(x) &= -(1+g(x))\left(\prod_{i=1}^{k-j-1} \cos \left(\frac{\pi}{2} x_{i}\right)\right)\cdot \\
&\quad\quad\mathbb{1}_{j>1}\sin \left(\frac{\pi}{2} x_{k-j-1}\right)
\end{align*}

\subsubsection{Vehicle Safety}
This a test problem for optimizing vehicle crash-worthiness with $d=5$ and $k=3$.
$\mathbb{X} = [1, 3]^5$.
We refer the readers to \citet{tanabe2020easy, liao2008multiobjective} for details on function definition.
During the simulation, we normalize each component of $f$ to lie between 0 and 1.

\subsubsection{Car Cab Design}
We refer the readers to \citet{deb2013evolutionary, tanabe2020easy} for details.
Note that in the original problem, there are stochastic components which we exclude in the experiments to obtain deterministic ground-truth outcome function.
During the simulation, we normalize each dimension of $f$ to between 0 and 1.

\subsubsection{OSY}
We adapted the constrained optimization OSY problem~\citep{osyczka1995new} into a multi-objective problem by treating all constraints as objectives.
We additionally flipped the signs of the two objectives of OSY such that all outputs are intended to be maximized. This adaptation makes OSY to be an outcome function with 6-dimensional inputs and 8-dimensional outputs.

\subsection{Utility Functions}
We consider several utility functions to capture several types of ways in which DMs may weigh the observed objective values. For all outcome functions, we consider piece-wise linear functions to represent constraint-like behavior and decreasing marginal returns.
We then designed four separate utility functions by taking the characteristics of each individual test outcome function.
Here we describe them in detail.

\subsubsection{Piecewise Linear Function}
We performed experiments on all test outcome functions using piece-wise linear functions as their shapes correspond to real-world diminishing marginal returns on outcomes and sharp declines in utility once constraints are violated.
For a $k$-dimensional input vector $\mathbf{y}$, this utility function $g$ is defined as
$$
g(\mathbf{y}; \mathbf{\beta_1}, \mathbf{\beta_2}, \mathbf{t}) = \sum_{i=1}^{k} h_{i}\left(y_{i}\right)
$$
where
\begin{align*}
    h_{i}\left(y_{i}\right) = \begin{cases} 
      \beta_{1, i} \cdot y_i + (\beta_{2, i} - \beta_{1, i})t_i & y_i < t_i\\
      \beta_{2, i} \cdot y_i & y_i\geq t_i
   \end{cases}
\end{align*}

For DTLZ2 (d=8, k=4) problem, we set 
\begin{align*}
\mathbf{\beta_1} &= [4, 3, 2, 1]\\
\mathbf{\beta_2} &= [0.4, 0.3, 0.2, 0.1]\\
\mathbf{t} &= [1, 1, 1, 1].
\end{align*}

For vehicle safety problem, we set 
\begin{align*}
\mathbf{\beta_1} &= [2, 6, 8]\\
\mathbf{\beta_2} &= [1, 2, 2]\\
\mathbf{t} &= [0.5, 0.8, 0.8].
\end{align*}

For the car cab design problem, we set 
\begin{align*}
\mathbf{\beta_1} &= [7.0, 6.75, 6.5, 6.25, 6.0, 5.75, 5.5, 5.25, 5.0]\\
\mathbf{\beta_2} &= [0.5, 0.4, 0.375, 0.35, 0.325, 0.3, 0.275, 0.25, 0.225]\\
\mathbf{t} &= [0.55, 0.54, 0.53, 0.52, 0.51, 0.5, 0.49, 0.48, 0.47]
\end{align*}
and the threshold parameter $\mathbf{t_i} = 0.75$ for all $i$.

For the OSY problem, we set
\begin{align*}
\mathbf{\beta_1} &= [0.02, 0.2, 10, 10, 10, 10, 10, 10]\\
\mathbf{\beta_2} &= [0.01, 0.1, 0.1, 0.1, 0.1, 0.1, 0.1, 0.1]\\
\mathbf{t} &= [1000, -100, 0, 0, 0, 0, 0, 0].
\end{align*}

\subsubsection{Linear Function}
For the car cab design problem, we experiment with a linear utility function.
For a $k$-dimensional outcomes vector $\mathbf{y}$, this utility function $g$ is defined as
$$
g(\mathbf{y}; \mathbf{\beta}) = \mathbf{\beta}^T\mathbf{y}
$$

Specifically, we set
$$
\beta = [2.25, 2, 1.75, 1.5, 1.25, 1, 0.75, 0.5, 0.25].
$$

\subsubsection{Product of Kumaraswamy Distribution CDFs}
Prior work on preference learning has utilized the Beta CDF to form utility functions \citep{Dewancker2016-ix}.
The Beta CDF provides a convenient, bounded monotonic transform that smoothly varies between increasing and decreasing marginal gains with respect to their inputs.
In this work, we utilize the Kumaraswamy CDF~\citep{kumaraswamy1980generalized, jones2009kumaraswamy}, which behaves much like the Beta CDF, but is differentiable. 
This allows us to optimize qNEIUU via gradient ascent using the true utility function in Section~\ref{sec:sim_study}.
For the vehicle safety problem, we experiment with the product of Kumaraswamy distribution CDFs as its utility function, representing that we wish to simultaneously achieve high utility values along each individual dimension without sacrificing much on others.
For a $k$-dimensional input vector $\mathbf{y}$, this utility function $g$ is defined as
$$
g(\mathbf{y}; \mathbf{a}, \mathbf{b}) = \prod_{i=1}^{k} F_{i}\left(y_{i}\right),
$$
where
$F_i(\cdot)$ is the CDF of a Kumaraswamy distribution with shape parameters $a_i$ and $b_i$.

Specifically, we set 
\begin{align*}
\mathbf{a} &= [0.5, 1, 1.5]\\
\mathbf{b} &= [1, 2, 3].
\end{align*}

\subsubsection{L1 Distance}
For the DTLZ2 problem, we additionally test a utility function using negative L1 distance from a Pareto-optimal point.
This choice of utility function mimics the scenario where the DM wish to keep the outcomes close to a specific desirable state such as physiological measurements in medical applications.

For the DTLZ2 problem, we choose the Pareto-optimal point to be $y^{\mathrm{PO}} = \textnormal{DTLZ2}(x^{\mathrm{PO}})$ where $x^{\mathrm{PO}}_i = 0.5$ for all $i$.

\subsubsection{Exponential Function Sum with Sigmoid Constraints}
The OSY problem is originally a constrained optimization problem and we wish design an utility function that can reflect this nature of this problem.

For a $k$-dimensional outcomes vector $\mathbf{y}$, we recall that the first two dimensions $y_1$ and $y_2$ are objectives we wish to maximize and in the original OSY problems, the remaining six constraints $y_{3..8}$ need to be kept positive.
To reflect these goals, we first normalize the outputs of $y_1$ and $y_2$ to be between 0 and 1 using min-max normalization.
Given an outcome vector $\mathbf{y}$, the utility function is then given by
$$
g(y) = (\exp(y_1) + \exp(y_2))\prod_{j=3}^8 S\left(\frac{50 y_j}{ \min\{-y^{\mathrm{min}}_j, y^{\mathrm{max}}_j\}}\right)
$$
where $S$ is the sigmoid function; $y^{min}_j$ and $y^{max}_j$ are empirically determined lower and upper bound of $y_j$.

\begin{table}[h]
    \resizebox{1\textwidth}{!}{
    \begin{tabular}{c|l|c|c|r}
    \textbf{Outcome function} & \textbf{Utility function} & $d$ & $k$ & \textbf{Reference}\\
    \midrule
    \multirow{2}{*}{Vehicle safety} & Piecewise linear function & \multirow{2}{*}{5} & \multirow{2}{*}{3} & \citet{liao2008multiobjective}\\
    \cline{2-2}
    & Product of Kumaraswamy distribution CDFs & & & \citet{tanabe2020easy}\\
    \hline
    \multirow{2}{*}{DTLZ2} & Piecewise linear function & \multirow{2}{*}{8} & \multirow{2}{*}{4} & \\
    \cline{2-2}
    & L1 distance from Pareto-optimal point & & & \citet{molga2005test}\\
    \hline
    \multirow{2}{*}{OSY} & Piecewise linear function & \multirow{2}{*}{6} & \multirow{2}{*}{8} & \\
    \cline{2-2}
    & Exponential function sum with sigmoid constraints & & & \citet{osyczka1995new}\\    \hline
    \multirow{2}{*} {Car cab design} & Piecewise linear function & \multirow{2}{*}{7} & \multirow{2}{*}{9} & \citet{deb2013evolutionary} \\
    \cline{2-2}
    & Linear function & & & \citet{tanabe2020easy} \\
    \hline
    \end{tabular}
    }
    \caption{
    Complete list of outcome and utility function combinations.
    $d$ and $k$ refers to input and output dimension of the test outcome function respectively.
    }
    \label{tab:complete_test_func}
\end{table}

\end{document}